\documentclass{article} % For LaTeX2e
\usepackage{iclr2027_conference,times}

\usepackage{amsmath,amsfonts,bm}

\def\eqref#1{equation~\ref{#1}}
\def\1{\bm{1}}

\DeclareMathAlphabet{\mathsfit}{\encodingdefault}{\sfdefault}{m}{sl}
\SetMathAlphabet{\mathsfit}{bold}{\encodingdefault}{\sfdefault}{bx}{n}

\usepackage{hyperref}
\usepackage{url}
\usepackage{amsmath,amssymb,amsthm}
\usepackage{booktabs}
\usepackage{multirow}
\usepackage{graphicx}
\usepackage{enumitem}

\usepackage{float}
\usepackage{algorithm}
\usepackage{algpseudocode}
\usepackage{import}

\newtheorem{theorem}{Theorem}[section]
\newtheorem{proposition}[theorem]{Proposition}
\newtheorem{lemma}[theorem]{Lemma}

\newtheorem{assumption}[theorem]{Assumption}

\usepackage{subcaption}

\title{Refinement-Based Flow Policy Optimization}

\author{
Bumgeun Park\thanks{Equal contribution.}~~,
Hyukjun Yang\footnotemark[1]~~,
Donghwan Lee\thanks{Corresponding author.}~~ \\
Korea Advanced Institute of Science and Technology (KAIST)\\
\texttt{\{j4t123,jundol32,donghwan\}@kaist.ac.kr}
}

\iclrfinalcopy % Uncomment for camera-ready version, but NOT for submission.
\begin{document}

\maketitle

\begin{abstract}
Flow-based policies offer an expressive representation for online reinforcement learning, but conventional flow matching requires samples drawn from the distribution to be modeled. This poses a challenge when the desired action distribution is defined only implicitly by a Q-function, since directly sampling actions from the resulting distribution is generally intractable. We propose Refinement-Based Flow Policy Optimization (RFPO), a novel framework for training a flow policy in online reinforcement learning by alternating between Q-guided sample refinement and self-target flow matching. RFPO first generates actions from Gaussian noise using the current flow policy and then uses a finite-step stochastic refinement procedure to move them toward an energy-based distribution induced by the Q-function. Each refined action is then paired with its corresponding initial noise sample and used as a fixed target for flow-matching training. By repeatedly refining its own outputs and learning from the resulting targets, RFPO incorporates Q-guidance into the policy without requiring direct samples from the target distribution, while retaining the capacity to represent multiple action modes. We further provide a theoretical analysis of the distributional dynamics induced by RFPO. Experiments on six synthetic two-dimensional target distributions with diverse geometries demonstrate that alternating Q-guided sample refinement and self-target flow matching captures complex multimodal structure without mode collapse. Across six continuous-control environments in MuJoCo, RFPO shows promising performance against representative online RL baselines in almost every environment.
\end{abstract}

\section{Introduction}
\label{sec:intro}
Online reinforcement learning (RL) aims to learn decision-making policies through interaction with an environment. In continuous-action actor--critic methods, a Q-function provides a learning signal for policy improvement, but the policy is often restricted to a Gaussian parameterization, which can limit its representational flexibility due to the unimodal nature of a single Gaussian. Generative modeling approaches, such as diffusion and flow matching (FM), offer promising alternatives to Gaussian policies because they can capture complex, potentially multimodal action distributions \citep{ho2020denoising,lipman2023flow}.

However, conventional diffusion and FM training relies on samples from the distribution to be modeled. In online RL, the desired action distribution is instead defined implicitly by the Q-function and changes as the critic is updated, making direct target samples difficult to obtain. Recent work has addressed this challenge through Q-weighted objectives and critic-derived score or velocity targets \citep{ding2024diffusion,ma2025efficient,psenka2024learning,li2026reverse}. Other approaches explicitly modify action samples using critic information before fitting a generative policy to the resulting targets \citep{yang2023policy,zhu2026real}. 

In this work, we address this challenge using Markov chain Monte Carlo (MCMC), which can refine initial samples toward a target distribution when direct sampling is intractable. We view this MCMC-based refinement as a sample-based policy improvement step that moves action samples toward a potentially multimodal target distribution induced by the Q-function. Building on this view, we propose Refinement-Based Flow Policy Optimization (RFPO), a novel framework for training a flow policy in online RL through iterative sample refinement.

Specifically, RFPO alternates between Q-guided sample refinement and self-target FM. The first process generates action samples from Gaussian noise using the current flow policy and applies a finite number of steps of the Metropolis-adjusted Langevin algorithm (MALA), a gradient-based MCMC method, to refine the samples toward an energy-based action distribution induced by the Q-function. In the second process, each refined action is paired with the base noise used to generate its original action, and the flow policy is trained on these pairs through conditional FM. By separating Q-guided target construction from flow-policy optimization, RFPO incorporates critic guidance through fixed regression targets. During the FM update, this construction does not require differentiating through the iterative flow sampler. To the best of our knowledge, RFPO is the first method to train a flow policy in online RL by iteratively refining samples generated by the current policy and using the refined samples as self-generated training targets. Repeatedly alternating between sample refinement and self-target FM allows the policy to learn from its own refined outputs and move toward the target distribution.

The main contributions of this work are summarized as follows:
\begin{itemize}
    \item We introduce RFPO, a novel framework for training a flow policy in online RL through two alternating processes: MCMC-based refinement of actions sampled from the current policy toward an energy-based target distribution induced by the Q-function, and self-target FM using the refined actions as training targets.

    \item We theoretically characterize the idealized distributional dynamics induced by RFPO, establishing monotonic improvement of the entropy-regularized policy objective for a fixed critic and identifying the energy-based target distribution as a fixed point.

    \item We empirically demonstrate that alternating Q-guided sample refinement and self-target FM captures complex multimodal structure across synthetic target distributions with diverse geometries, and that RFPO achieves promising performance against representative online RL baselines in almost every evaluated continuous-control environment.
\end{itemize}

\section{Related work}
\label{sec:related}

\subsection{Q-Guided Generative Policy Learning}
\label{sec:rw_q_guided_policy}
Recent studies have explored diffusion- and flow-based policies in online RL using Q-guided training objectives. For diffusion-based policies, QVPO uses a Q-weighted variational objective without requiring samples drawn directly from the target action distribution \citep{ding2024diffusion}. DPMD and SDAC employ score matching reweighted by Q-values to derive tractable updates for policy mirror descent and maximum-entropy policy optimization, respectively \citep{ma2025efficient}. Beyond Q-value weighting, other approaches directly use the gradient of the Q-function to guide policy learning. QSM derives policy updates from the relationship between the policy score and the gradient of the Q-function \citep{psenka2024learning}, whereas DACER backpropagates the gradient of the Q-function through the reverse diffusion process to update the policy and estimates policy entropy to regulate exploration \citep{wang2024diffusion}. For flow-based policy learning, RFM uses Q-derived importance weights to estimate velocity targets and the gradient of the Q-function to stabilize this estimation \citep{li2026reverse}.

A complementary line of work constructs training targets by explicitly refining action samples using the gradient of the Q-function. DIPO refines replay-buffer actions through Q-gradient ascent and fits a diffusion policy to the resulting actions \citep{yang2023policy}. DACER-F applies unadjusted Langevin updates to replay-buffer actions and trains a flow policy by combining FM objective on the refined actions with direct Q-value maximization \citep{zhu2026real}. Distinct from these methods, RFPO initializes MALA from actions sampled from the current flow policy and pairs each refined action with its corresponding source noise for self-target FM. This recursive construction enables a direct analysis of the resulting distributional dynamics, presented in Section~\ref{sec:theory}. The methods discussed above are summarized in Table~\ref{tab:positioning}.

\subsection{Amortizing Iterative Sample Refinement}
\label{sec:rw_sample_refinement}
Beyond methods that refine previously collected actions, a broader line of work repeatedly refines samples produced by the model being trained and uses the refined samples to update that same model. For example, Amortized MCMC draws initial samples from an approximation network, applies MCMC transitions to them, and trains the network to match the resulting refined distribution \citep{li2017approximate}. Amortized SVGD follows a similar structure by applying Stein variational gradient descent to samples produced by a stochastic neural network and adjusting the network parameters to reproduce the resulting refined samples \citep{feng2017learning}. MCMC teaching uses a generative model to produce initial samples from latent variables and an energy-based model to refine them through a finite-step Langevin chain \citep{xie2018cooperative}. The generative model is then trained to reproduce the refined samples from the corresponding latent variables. In each case, the effect of iterative sample refinement is progressively incorporated into the model that generated the initial samples.

RFPO adopts this refinement--learning structure for training a flow policy in online RL. The current flow policy generates actions, which are refined through finite-step MALA toward a target distribution induced by the Q-function. Each refined action is then paired with the noise used to generate the original action before refinement and serves as a self-generated target for FM. Whereas the preceding methods are developed for approximate inference or generative modeling, RFPO applies this principle to online policy learning, where the state-dependent target distribution is defined by a Q-function that is continually updated through environment interaction.

\begin{table*}[t]
    \centering
    \caption{
        Comparison of policy-training and action-refinement mechanisms among selected Q-guided generative policy methods. Refinement denotes explicit updates of action samples before they are used as training targets. FM and ULA denote flow matching and the unadjusted Langevin algorithm, respectively.}
    \label{tab:positioning}
    \setlength{\tabcolsep}{1.8pt}
    \begin{tabular}{
    @{}l
    p{0.40\textwidth}
    p{0.25\textwidth}
    l@{}
}
        \toprule
        Method
        & Policy training
        & \multicolumn{1}{c}{Refinement initialization}
        & \multicolumn{1}{c}{Refinement rule} \\
        \midrule

        QVPO
        & Q-weighted variational objective
        & \multicolumn{1}{c}{---}
        & \multicolumn{1}{c}{---} \\
        DPMD/SDAC
        & reweighted score matching
        & \multicolumn{1}{c}{---}
        & \multicolumn{1}{c}{---} \\
        QSM
        & Q-gradient score regression
        & \multicolumn{1}{c}{---}
        & \multicolumn{1}{c}{---} \\
        
        DACER
        & Q maximization via reverse diffusion
        & \multicolumn{1}{c}{---}
        & \multicolumn{1}{c}{---} \\
        RFM
        & posterior-mean velocity regression
        & \multicolumn{1}{c}{---}
        & \multicolumn{1}{c}{---} \\

        \midrule

        DIPO
        & diffusion fitting to refined actions
        & \multicolumn{1}{c}{replay-buffer actions}
        & \multicolumn{1}{c}{Q-gradient ascent} \\

        DACER-F
        & refined-action FM and Q maximization
        & \multicolumn{1}{c}{replay-buffer actions}
        & \multicolumn{1}{c}{ULA} \\

        \textbf{RFPO (ours)}
        & self-target FM
        & \multicolumn{1}{c}{current-policy actions}
        & \multicolumn{1}{c}{MALA} \\

        \bottomrule
    \end{tabular}
\end{table*}

\section{Preliminaries}
\label{sec:preliminaries}

\subsection{Problem Formulation}
\label{sec:problem_formulation}
We consider a continuous-action Markov decision process (MDP) defined by $\mathcal{M}=(\mathcal{S},\mathcal{A},P,r,\gamma)$, where $\mathcal{S}$ and $\mathcal{A}$ denote the state and action spaces, respectively. For a state $s\in\mathcal{S}$ and an action $a\in\mathcal{A}$, $P(s'\mid s,a)$ is the transition distribution over the next state $s'\in\mathcal{S}$, $r(s,a)$ is the reward function, and $\gamma\in[0,1)$ is the discount factor. Let $Q_\phi(s,a)$ denote a differentiable Q-function parameterized by $\phi$. Following the energy-based policy formulation commonly used in maximum-entropy reinforcement learning \citep{haarnoja2017reinforcement,haarnoja2018soft,levine2018reinforcement}, we adopt the Boltzmann distribution induced by the Q-function as the target action distribution:
\begin{equation}
    p_\alpha(a\mid s)
    =
    \frac{1}{Z_\alpha(s)}
    \exp\left(\frac{Q_\phi(s,a)}{\alpha}\right),
    \label{eq:target_action_distribution}
\end{equation}
where $\alpha>0$ is a temperature parameter and $Z_\alpha(s)=\int_{\mathcal{A}} \exp\left(Q_\phi(s,\bar a)/\alpha\right)d\bar a$ is the state-dependent normalizing constant. A smaller $\alpha$ concentrates probability mass more strongly around high-value actions, whereas a larger $\alpha$ spreads probability mass more broadly across the action space.

\subsection{Conditional Flow Matching}
\label{sec:flow_matching}
Flow matching (FM) learns a time-dependent vector field that transports samples from a simple base distribution such that their endpoint distribution matches a target distribution \citep{lipman2023flow}. Given a state $s$, let $x_0\sim p_0=\mathcal{N}(0,I_d)$ and $x_1\sim p_1(\cdot\mid s)$ denote samples from the base and conditional target distributions, respectively. A time-dependent vector field $v_\theta(s,x,t)$, with $t\in[0,1]$, defines the flow through
\begin{equation}
    \frac{d x_t}{dt}=v_\theta(s,x_t,t),
    \qquad x_{t=0}=x_0,
    \qquad
    x_1^\theta=\Phi_\theta(s,x_0),
    \label{eq:flow_ode}
\end{equation}
where $\Phi_\theta$ denotes the flow map obtained by integrating the ODE from $t=0$ to $t=1$. The objective is to learn a vector field such that the conditional distribution of $x_1^\theta$ matches $p_1(\cdot\mid s)$. To learn this vector field, conditional FM uses source--target pairs $(x_0,x_1)$. For $t\sim\mathcal{U}[0,1]$, we consider the linear interpolation and its associated velocity
\begin{equation}
    x_t=(1-t)x_0+t x_1,
    \qquad
    \frac{d x_t}{dt}=x_1-x_0.
    \label{eq:linear_probability_path}
\end{equation}
The resulting conditional FM objective is
\begin{equation}
    \mathcal{L}_{\mathrm{FM}}(\theta)
    =
    \mathbb{E}_{\substack{
        s\sim\rho,\,
        x_0\sim p_0,\,
        x_1\sim p_1(\cdot\mid s),\,
        t\sim\mathcal{U}[0,1]
    }}
    \left[
        \left\|
            v_\theta(s,x_t,t)-(x_1-x_0)
        \right\|_2^2
    \right],
    \label{eq:flow_matching_objective}
\end{equation}
where $\rho$ denotes the state distribution used for training. Under the standard FM formulation, minimizing \eqref{eq:flow_matching_objective} recovers the marginal vector field associated with the prescribed probability path, whose flow transports $p_0$ toward $p_1(\cdot\mid s)$. In the action-space setting considered below, we denote the flow-generated endpoint $x_1^\theta$ by $a_1$.

\subsection{Metropolis-Adjusted Langevin Algorithm}
\label{sec:mala}
The Metropolis-adjusted Langevin algorithm (MALA) is a Markov chain Monte Carlo method for sampling from a target distribution when direct sampling is difficult \citep{roberts1996exponential}. It uses gradient information to propose moves toward high-density regions and applies a Metropolis--Hastings correction to preserve the target distribution. Consider a target density $p_{\mathrm{tar}}(z)$ over $z\in\mathbb{R}^{d}$. Starting from an arbitrary initial point $z^{(0)}$ in the support of the target distribution, MALA constructs a sequence of iterates through successive updates. Given the current iterate $z^{(i)}$, MALA proposes a candidate $\widetilde{z}^{(i)}$:
\begin{equation}
    \widetilde{z}^{(i)}
    =
    z^{(i)}
    +\eta\nabla_z\log p_{\mathrm{tar}}(z^{(i)})
    +\sqrt{2\eta}\,\epsilon_i,
    \qquad
    \epsilon_i\sim\mathcal{N}(0,I_d),
    \label{eq:mala_proposal}
\end{equation}
where $\eta>0$ is the step size. The corresponding proposal density is
\begin{equation}
    q(z'\mid z)
    =
    \mathcal{N}\left(
        z';
        z+\eta\nabla_z\log p_{\mathrm{tar}}(z),
        2\eta I_d
    \right).
    \label{eq:mala_proposal_distribution}
\end{equation}
Here, $q(z'\mid z)$ denotes the probability density of proposing $z'$ from the current iterate $z$. In particular, the candidate $\widetilde{z}^{(i)}$ is drawn from $q(\cdot\mid z^{(i)})$. Because the proposal is generally asymmetric, MALA accepts the candidate with probability
\begin{equation}
    P_{\mathrm{acc}}\big(z^{(i)},\widetilde{z}^{(i)}\big)
    =
    \min\left\{
        1,\,
        \frac{
            p_{\mathrm{tar}}(\widetilde{z}^{(i)})\,
            q(z^{(i)}\mid\widetilde{z}^{(i)})
        }{
            p_{\mathrm{tar}}(z^{(i)})\,
            q(\widetilde{z}^{(i)}\mid z^{(i)})
        }
    \right\},
    \label{eq:mala_acceptance}
\end{equation}
where $P_{\mathrm{acc}}$ denotes the acceptance probability. Specifically, given $r_i\sim\mathcal{U}[0,1]$, the next iterate is determined as
\begin{equation}
    z^{(i+1)}
    =
    \begin{cases}
        \widetilde{z}^{(i)},
        & r_i\leq P_{\mathrm{acc}}(z^{(i)},\widetilde{z}^{(i)}),\\
        z^{(i)},
        & \text{otherwise}.
    \end{cases}
    \label{eq:mala_transition}
\end{equation}
The Metropolis--Hastings correction ensures that the resulting Markov transition leaves $p_{\mathrm{tar}}$ invariant. Moreover, because $p_{\mathrm{tar}}$ enters only through its log-density gradient and ratios of density values, its normalizing constant need not be evaluated.

\begin{figure}[tb!]
\begin{center}
\includegraphics[width=\linewidth]{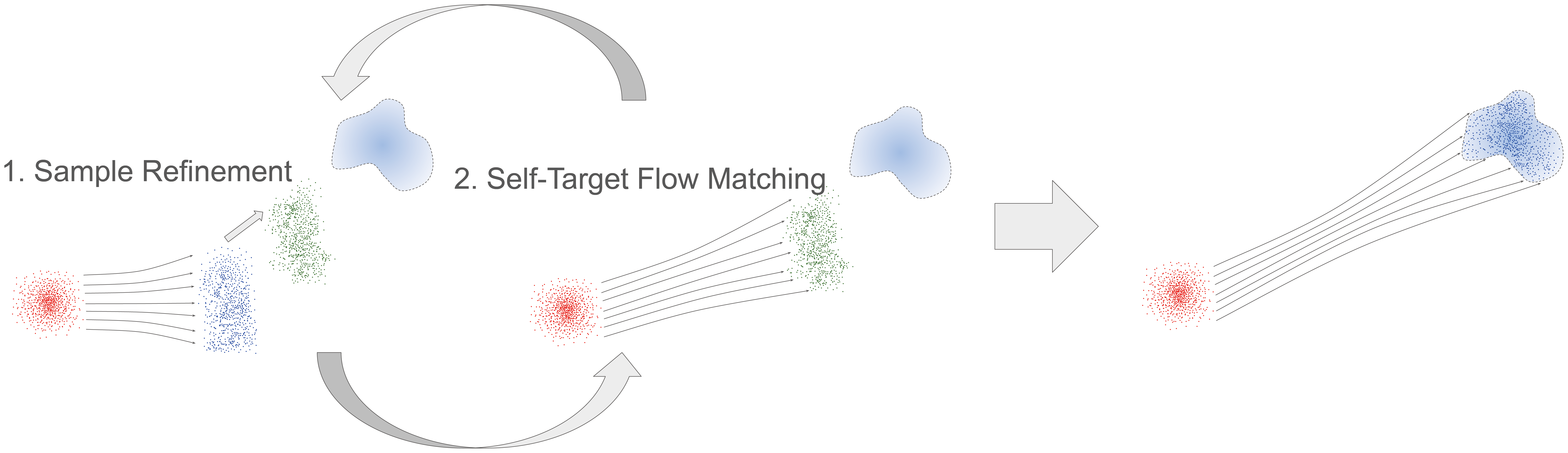}
\end{center}
\caption{Overview of RFPO. Shaded regions depict the target distribution $p_\alpha(\cdot\mid s)$; red, blue, and green dots denote Gaussian base samples $x_0$, flow-generated actions $a_1$, and refined actions $a_1'$, respectively. Q-guided sample refinement moves $a_1$ toward the target distribution (left), and self-target FM trains the policy on pairs of $a_1'$ and their corresponding base-noise samples $x_0$ (center). Curved arrows indicate repeated alternation between these processes, yielding a flow policy that approximates the target distribution (right).}
\label{fig:overall}
\end{figure}

\section{Refinement-Based Flow Policy Optimization}
\label{sec:method}
We propose Refinement-Based Flow Policy Optimization (RFPO), a novel framework for training a flow policy in online RL that combines Q-guided sample refinement with self-target FM, thereby removing the need for direct samples from the energy-based target distribution $p_\alpha(\cdot\mid s)$. RFPO alternates between two processes during training. First, the current flow policy generates actions from Gaussian noise. These actions are then refined toward $p_\alpha(\cdot\mid s)$ using gradient information from the Q-function. Second, the flow policy is trained through self-target FM using pairs of refined actions and their corresponding initial noise samples.

Rather than directly maximizing the Q-function through the flow parameters, RFPO uses Q-guided refinement to construct targets that are held fixed during the FM update. The resulting flow-policy update does not require backpropagating critic gradients through the iterative flow sampler and therefore avoids the associated sampler-dependent gradient path (see Appendix~\ref{app:one_step_interpretation} for an interpretation). Through this iterative cycle, RFPO is designed to shift the policy's probability mass toward high-value action regions while retaining its ability to represent multiple action modes. The two-process training procedure is illustrated in Figure~\ref{fig:overall}.

\subsection{Q-Guided Sample Refinement}
\label{sec:sample_refinement}
Given a state $s$, we first draw Gaussian noise $x_0\sim\mathcal{N}(0,I_d)$ and generate an action using the current flow policy:
\begin{equation}
    a_1=\Phi_\theta(s,x_0),
    \label{eq:flow_endpoint}
\end{equation}
where $\Phi_\theta$ denotes the flow map defined in Section~\ref{sec:flow_matching}. The action distribution induced by the current flow policy does not necessarily match the target distribution $p_\alpha(\cdot\mid s)$ defined in \eqref{eq:target_action_distribution}. We therefore refine $a_1$ toward $p_\alpha(\cdot\mid s)$ using MALA as follows. The MALA update in \eqref{eq:mala_proposal} uses the gradient of the log target density, which in our setting is $\nabla_a\log p_\alpha(a\mid s)=\frac{1}{\alpha}\nabla_a Q_\phi(s,a)$. 

Starting from $a^{(0)}=a_1$, we perform $K$ MALA steps using the current Q-function, with the action evolving through successive updates $\bigl(a^{(0)}\rightarrow a^{(1)}\rightarrow\cdots\rightarrow a^{(K)}\bigr)$. At iteration $i=0,\ldots,K-1$, we construct a candidate action $\widetilde{a}^{(i)}$ from $a^{(i)}$ as
\begin{equation}
    \widetilde{a}^{(i)}
    =
    a^{(i)}
    +
    \frac{\eta}{\alpha}
    \nabla_a Q_\phi(s,a^{(i)})
    +
    \sqrt{2\eta}\,\epsilon_i,
    \qquad
    \epsilon_i\sim\mathcal{N}(0,I_d),
    \label{eq:q_guided_proposal}
\end{equation}
where $\eta>0$ is the refinement step size. Substituting the target density $p_\alpha(\cdot\mid s)$ into \eqref{eq:mala_proposal_distribution} and \eqref{eq:mala_acceptance} yields the following:
\begin{align}
    &\text{Proposal density:}
    \notag\\
    &q_\phi(\widetilde{a}^{(i)}\mid a^{(i)},s)
    =
    \mathcal{N}\left(
        \widetilde{a}^{(i)};\,
        a^{(i)}+\frac{\eta}{\alpha}\nabla_a Q_\phi(s,a^{(i)}),
        2\eta I_d
    \right),
    \label{eq:q_guided_proposal_density}
    \\[4pt]
    &\text{Acceptance probability:}
    \notag\\
    &P_{\mathrm{acc}}\big(a^{(i)},\widetilde{a}^{(i)};s\big)
    =
    \min\left\{
        1,\,
        \exp\left(
            \frac{
                Q_\phi(s,\widetilde{a}^{(i)})
                -
                Q_\phi(s,a^{(i)})
            }{\alpha}
        \right)
        \frac{
            q_\phi(a^{(i)}\mid\widetilde{a}^{(i)},s)
        }{
            q_\phi(\widetilde{a}^{(i)}\mid a^{(i)},s)
        }
    \right\}.
    \label{eq:q_guided_acceptance}
\end{align}
The next action is set to $a^{(i+1)}=\widetilde{a}^{(i)}$ with probability $P_{\mathrm{acc}}(a^{(i)},\widetilde{a}^{(i)};s)$ and remains $a^{(i+1)}=a^{(i)}$ otherwise. After $K$ updates, we obtain the refined action $a_1'=a^{(K)}$.

We use MALA as a finite-step stochastic refinement procedure rather than running it until convergence. Accordingly, $a_1'$ is not assumed to be an exact sample from $p_\alpha(\cdot\mid s)$; the purpose of refinement is to bring the action distribution induced by the current flow policy closer to the target distribution. In the following subsection, we update the flow policy using $a_1'$ as a fixed target. Since the action spaces considered in our experiments are bounded, details on the action transformation and the corresponding MALA refinement are provided in Appendix~\ref{app:bounded_actions}.

\subsection{Self-Target Flow Matching}
\label{sec:self_target_flow_matching}
Once the refined action $a_1'$ is obtained, it serves as a self-generated target for training the flow policy. We pair it with the base noise $x_0$ that generated the original action $a_1=\Phi_\theta(s,x_0)$, forming the source--target pair $(x_0,a_1')$. Following the conditional FM formulation in Section~\ref{sec:flow_matching}, we sample $t\sim\mathcal{U}[0,1]$ and construct the linear interpolation $a_t=(1-t)x_0+t a_1'$. The flow policy is then trained using the self-target FM objective
\begin{equation}
    \mathcal{L}_{\mathrm{SFM}}(\theta)
    =
    \mathbb{E}_{\substack{
        s\sim\rho,\,
        x_0\sim\mathcal{N}(0,I_d),\\
        t\sim\mathcal{U}[0,1]
    }}
    \left[
        \left\|
            v_\theta(s,a_t,t)
            -
            (a_1'-x_0)
        \right\|_2^2
    \right],
    \label{eq:self_target_fm}
\end{equation}
where the expectation also accounts for the stochasticity of the finite-step MALA refinement used to construct $a_1'$. During this update, $a_1'$ is treated as a fixed target; gradients are propagated only through the velocity field $v_\theta$ and not through the flow sampling or MALA refinement used to generate $a_1'$.

\subsection{Critic Learning}
\label{sec:critic_learning}
We adopt a clipped double-Q critic update \citep{fujimoto2018addressing} using transitions sampled from a replay buffer $\mathcal{D}$. To construct the temporal-difference target, we generate an action from Gaussian noise using the online flow policy at the next state $s'$:
\begin{equation}
    x_0'\sim\mathcal{N}(0,I_d),
    \qquad
    a'=\Phi_{\theta}(s',x_0').
    \label{eq:target_next_action}
\end{equation}
The target uses the minimum of two target critics evaluated at $a'$. The generated action $a'$ is used directly in the temporal-difference target, without additional smoothing noise. Complete details of the critic update, target-critic updates, and the overall training procedure are provided in Appendix~\ref{app:training_details}.

\section{Theoretical Analysis}
\label{sec:theory}
We analyze whether self-target FM reproduces the refined action distribution and how alternating sample refinement and self-target FM changes the policy distribution relative to the target distribution. We consider an idealized population-level setting in which the self-target FM function class is realizable for the continuous conditional-mean velocity field $v^\star$ defined below, and each self-target FM update exactly recovers this field by minimizing $\mathcal{L}_{\mathrm{SFM}}$. We further assume that the flow ODE is solved without numerical integration error.

Let $\pi_\theta(\cdot\mid s)$ denote the policy distribution induced by $a=\Phi_\theta(s,x_0)$ with $x_0\sim\mathcal{N}(0,I_d)$. Let us define $\pi_n=\pi_{\theta_n}$ for the policy after $n$ cycles, where each cycle consists of MALA-based sample refinement followed by self-target FM. The Q-function, temperature $\alpha$, and refinement step size $\eta$ are all held fixed across the cycles under analysis.

For a fixed state $s$ and policy parameter $\theta_n$, let $\mu_n(\cdot\mid s)$ denote the distribution of the refined action $a_1'=a^{(K)}$ after $K$ MALA transitions. Let $p_t(\cdot\mid s)$ denote the distribution of the linearly interpolated action $a_t=(1-t)x_0+t a_1'$ defined in Section~\ref{sec:self_target_flow_matching}. We define the velocity field $v^\star$ as the conditional mean of the target velocity $a_1'-x_0$ as follows:
\begin{equation}
    v^\star(s,x,t)
    :=
    \mathbb{E}\bigl[
        a_1'-x_0 \,\big|\, a_t=x,\,s
    \bigr].
    \label{eq:theory_optimal_velocity}
\end{equation}
Under the stated idealizations and Assumption~\ref{ass:reg} in Appendix~\ref{app:assumptions}, the following results hold for $\rho$-almost every state $s$, where $\rho$ is the state distribution used for self-target FM.

\begin{lemma}[Self-target FM reproduces the refined action distribution]
\label{lem:coupling}
The velocity field $v^\star$ minimizes the state-wise self-target FM objective. The action generated by integrating $v^\star$ from Gaussian noise follows the refined action distribution:
\[
    x_0\sim\mathcal{N}(0,I_d)
    \quad\Longrightarrow\quad
    \Phi^{v^\star}(s,x_0)\sim\mu_n(\cdot\mid s),
\]
where $\Phi^{v^\star}(s,\cdot)$ denotes the flow map induced by $v^\star$.
\end{lemma}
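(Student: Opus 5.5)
Fix a state $s$ for which the regularity conditions of Assumption~\ref{ass:reg} hold; these are $\rho$-almost every state. The proof has two parts. First, $v^\star$ minimizes the state-wise objective: this is the usual $L^2$ projection argument. Second, $v^\star$ transports $\mathcal{N}(0,I_d)$ to $\mu_n(\cdot\mid s)$: this is the marginalization argument of conditional flow matching.

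Two features of the coupling need attention. The pair $(x_0,a_1')$ is not independent, since $a_1'$ is obtained from $\Phi_{\theta_n}(s,x_0)$ through $K$ MALA steps. We only need its joint law, whose marginals are exactly $\mathcal{N}(0,I_d)$ and $\mu_n(\cdot\mid s)$.

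\textbf{Step 1: $v^\star$ minimizes the objective.} For fixed $t$, decompose the integrand around the conditional mean:
\[
\mathbb{E}\|v(s,a_t,t)-(a_1'-x_0)\|^2=\mathbb{E}\|v(s,a_t,t)-v^\star(s,a_t,t)\|^2+\mathbb{E}\|v^\star(s,a_t,t)-(a_1'-x_0)\|^2 .
\]
The cross term vanishes by the tower property, conditioning on $a_t$. Finiteness of these expressions follows from the second-moment bounds in Assumption~\ref{ass:reg}. Integrating over $t\in[0,1]$ then shows that $v^\star$ attains the minimum, and that it is the unique minimizer up to $p_t\,dt$-null sets.

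\textbf{Step 2: $v^\star$ generates the path $p_t$.} We show that $p_t(\cdot\mid s)$ solves the continuity equation $\partial_t p_t+\nabla\cdot(p_t v^\star)=0$ in the weak sense. Take a test function $\varphi\in C_c^\infty(\mathbb{R}^d)$. Then
\[
\frac{d}{dt}\mathbb{E}[\varphi(a_t)]=\mathbb{E}[\nabla\varphi(a_t)\cdot(a_1'-x_0)]=\mathbb{E}[\nabla\varphi(a_t)\cdot v^\star(s,a_t,t)].
\]
The first equality differentiates under the expectation, which is justified by dominated convergence using the moment bounds. The second equality conditions on $a_t$. This is the weak form of the continuity equation. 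The boundary values are $p_0=\mathcal{N}(0,I_d)$ and $p_1=\mu_n(\cdot\mid s)$, because $a_t$ equals $x_0$ at $t=0$ and $a_1'$ at $t=1$.

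\textbf{Step 3: pass from the continuity equation to the flow map.} This step is the main obstacle. We need the flow map $\Phi^{v^\star}$ to be well defined, and we need the continuity-equation solution driven by $v^\star$ to be unique, so that it equals the pushforward $(X_t^{v^\star})_\#\mathcal{N}(0,I_d)$. I would invoke the regularity in Assumption~\ref{ass:reg}: continuity of $v^\star$ together with local Lipschitz continuity and linear growth, or an integrability condition of the form $\int_0^1\!\int\|v^\star\|\,dp_t\,dt<\infty$ combined with local Lipschitz continuity. Under those conditions, the standard uniqueness theorem for continuity equations with such velocity fields (Ambrosio--Gigli--Savaré) identifies $p_t$ with the pushforward of $p_0$ under the ODE flow, where the ODE flow is globally defined by Cauchy--Lipschitz.

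One technical point concerns the endpoint $t=1$. If the MALA chain has a rejection atom, $\mu_n(\cdot\mid s)$ need not have a smooth density. The linear interpolation still gives $p_t$ a density for $t<1$ through its Gaussian component. I would therefore apply the uniqueness argument on $[0,1)$ and then take the limit $t\to1$ using weak continuity of $t\mapsto p_t$. This gives $\Phi^{v^\star}(s,x_0)\sim p_1=\mu_n(\cdot\mid s)$, which completes the proof.
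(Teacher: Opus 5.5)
There is a genuine gap at the endpoint $t=1$.

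Your Steps 1 and 2, and the first half of Step 3, match the paper's proof. The paper gets the $L^2$ decomposition and the weak continuity equation by citing the general-coupling results of Pooladian et al.\ and Tong et al.; deriving them directly, as you do, is fine. It then applies the Ambrosio--Gigli--Savar\'e representation theorem on $[0,\tau]$ for each $\tau<1$ to obtain $\mathrm{Law}(\Phi^{v^\star}_t(s,x_0))=p_t(\cdot\mid s)$ for $t<1$.

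The gap is your final step. Weak continuity of $t\mapsto p_t$ at $t=1$ says only that the \emph{laws} of $X_t:=\Phi^{v^\star}_t(s,x_0)$ converge to $\mu_n(\cdot\mid s)$. It does not say that the trajectories $X_t$ converge, and without that the time-one map $\Phi^{v^\star}(s,x_0)$ is not even defined. Cauchy--Lipschitz cannot supply it: the regularity assumption on $v^\star$ covers only $[0,T]$ with $T<1$, and the constants $L_T$ may blow up. Indeed $v^\star(s,x,t)=\bigl(\mathbb{E}[a_1'\mid a_t=x,s]-x\bigr)/(1-t)$ generally degenerates as $t\to1$. For $d\ge2$, a field $v(x,t)=\omega(t)Jx$ with $J$ skew-symmetric and $\int_0^1\omega(t)\,dt=\infty$ keeps the law $\mathcal{N}(0,I_d)$ fixed, yet its trajectories do not converge.

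The missing idea is the finite-length argument the paper uses. Using $\mathrm{Law}(X_t)=p_t$ for $t<1$, Tonelli, and the conditional Jensen bound,
\[
\mathbb{E}\int_0^1\|\dot X_t\|\,dt=\int_0^1\!\!\int\|v^\star(s,x,t)\|\,p_t(dx\mid s)\,dt\le\mathbb{E}\|a_1'-x_0\|<\infty .
\]
So almost every trajectory has finite length and extends absolutely continuously to $[0,1]$, and $\Phi^{v^\star}(s,x_0)=\lim_{t\to1}X_t$ exists almost surely. Almost sure convergence gives convergence in law. The limit law is $\lim_{t\to1}p_t=\mu_n(\cdot\mid s)$, because $a_t\to a_1'$ pointwise. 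You listed $\int_0^1\!\int\|v^\star\|\,dp_t\,dt<\infty$ as a possible hypothesis for the representation theorem, but it is exactly this bound, not weak continuity, that closes the endpoint.

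Separately, your stated reason for stopping short of $t=1$ is not right. Since $x_0$ and $a_1'$ are dependent, $p_t$ need not have a density even for $t<1$. On the event that all $K$ proposals are rejected, $a_t=(1-t)x_0+t\Phi_{\theta_n}(s,x_0)$, and the map $x\mapsto(1-t)x+t\Phi_{\theta_n}(s,x)$ need not be injective. This does not break your argument, because the representation theorem is stated for measures and needs no densities. The actual reason to work on $[0,\tau]$ is the $t<1$ restriction of the regularity assumption on $v^\star$.
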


Lemma~\ref{lem:coupling} establishes that self-target FM reproduces the refined action distribution even when each refined action is paired with its original source noise. This is an application of the general conditional FM framework for source--target couplings \citep[Lemma~3.1]{pooladian2023multisample} \citep[Theorems~3.1 and 3.2]{tong2024improving}. The pairing affects the transport path, while the terminal distribution remains the refined action distribution $\mu_n(\cdot\mid s)$.

Let $M_s$ denote one MALA transition targeting $p_\alpha(\cdot\mid s)$, and let $\pi M_s^K$ denote the distribution obtained by drawing an action from $\pi$ and applying $K$ transitions. The next lemma characterizes the combined effect of sample refinement and self-target FM.

\begin{lemma}[One cycle is equivalent to $K$ MALA transitions]
\label{lem:chain}
Each cycle satisfies
\[
    \pi_{n+1}(\cdot\mid s)
    =
    \mu_n(\cdot\mid s)
    =
    \bigl(\pi_nM_s^K\bigr)(\cdot\mid s).
\]
Consequently,
\[
    \pi_n(\cdot\mid s)
    =
    \bigl(\pi_0M_s^{nK}\bigr)(\cdot\mid s).
\]
\end{lemma}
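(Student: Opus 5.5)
The plan is to establish the two equalities in the first display separately, and then obtain the closed form by induction on $n$.

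\emph{Step 1: $\mu_n=\pi_nM_s^K$.} This follows from how refinement is constructed. Fix $s$ and $\theta_n$. The initial MALA iterate is $a^{(0)}=a_1=\Phi_{\theta_n}(s,x_0)$ with $x_0\sim\mathcal{N}(0,I_d)$, so by definition $a^{(0)}\sim\pi_n(\cdot\mid s)$. Conditional on $a^{(i)}$, the next iterate $a^{(i+1)}$ is generated by the proposal \eqref{eq:q_guided_proposal} and the accept/reject rule with $P_{\mathrm{acc}}$. Both use fresh randomness $(\epsilon_i,r_i)$, which is independent of $x_0$ and of all earlier steps. The conditional law of $a^{(i+1)}$ given the whole past $(x_0,a^{(0)},\dots,a^{(i)})$ is therefore the kernel $M_s(a^{(i)},\cdot)$, and $(a^{(i)})_{i\ge 0}$ is a time-homogeneous Markov chain with kernel $M_s$. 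The Q-function, $\alpha$ and $\eta$ are all held fixed, which is what makes the kernel the same at every step. Repeated application of the Chapman--Kolmogorov identity then gives the law of $a^{(K)}$ as $\pi_nM_s^K$. By definition this law is $\mu_n(\cdot\mid s)$.

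\emph{Step 2: $\pi_{n+1}=\mu_n$.} Under the idealization, the self-target FM update exactly minimizes $\mathcal{L}_{\mathrm{SFM}}$ within a realizable class. We first observe that $\mathcal{L}_{\mathrm{SFM}}$ is an expectation over $s\sim\rho$ of state-wise objectives. The population minimizer is therefore determined $\rho$-a.e. by the state-wise minimizer, which Lemma~\ref{lem:coupling} identifies as $v^\star$. Hence $v_{\theta_{n+1}}(s,\cdot,\cdot)=v^\star(s,\cdot,\cdot)$ for $\rho$-a.e.\ $s$, almost everywhere with respect to the interpolation path $p_t$.

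This a.e.\ identification is the main obstacle. We need equality along the path for a.e.\ $t$, and in particular we need the ODE flow maps to coincide. We would handle it using the regularity in Assumption~\ref{ass:reg}, which gives continuity of $v^\star$ and, we expect, Lipschitz-type bounds ensuring a unique flow. From this we would argue either that the flows agree, or that both velocity fields solve the same continuity equation with the same initial law $\mathcal{N}(0,I_d)$ and hence generate the same marginals. The ODE is assumed to be solved exactly, so $\Phi_{\theta_{n+1}}(s,\cdot)=\Phi^{v^\star}(s,\cdot)$, at least in pushforward law. Lemma~\ref{lem:coupling} then gives $\pi_{n+1}(\cdot\mid s)=\Phi^{v^\star}(s,\cdot)_\#\mathcal{N}(0,I_d)=\mu_n(\cdot\mid s)$.

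\emph{Step 3: iteration.} Combining the two steps gives the recursion $\pi_{n+1}=\pi_nM_s^K$. The claim $\pi_n=\pi_0M_s^{nK}$ is trivial at $n=0$. If it holds at $n$, then
\[
\pi_{n+1}=(\pi_0M_s^{nK})M_s^K=\pi_0M_s^{(n+1)K},
\]
using associativity of kernel composition (Fubini/Tonelli for nonnegative kernels). This uses crucially that $M_s$ does not change across cycles. Taking the intersection over $n$ of the countably many $\rho$-full sets of states on which each step holds gives the conclusion for $\rho$-a.e.\ $s$.
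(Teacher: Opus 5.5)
Your Steps 1 and 3 agree with the paper. Your Markov-chain justification of $\mu_n=\pi_nM_s^K$ is in fact more careful than the paper's appeal to the definitions. Step 2, however, has a genuine gap at exactly the point you flag. You never supply the argument that turns ``$v_{\theta_{n+1}}=v^\star$ for $dt\otimes p_t$-a.e.\ $(t,x)$'' into $\Phi_{\theta_{n+1}}(s,\cdot)_\#\mathcal{N}(0,I_d)=\mu_n$. The tool you propose, continuity and Lipschitz bounds for $v^\star$, does not do this. The two fields are only known to agree where $p_t$ puts mass, and uniqueness of the $v^\star$-flow says nothing about where trajectories of the learned field go. Your continuity-equation route has the same problem. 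It is true that $(p_t)$ also solves the continuity equation with $v_{\theta_{n+1}}$ in place of $v^\star$. But identifying it with the pushforward under $\Phi_{\theta_{n+1}}$ requires uniqueness for that equation with the learned field, not with $v^\star$.

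The idea the paper uses, and which your proposal lacks, is that $p_t$ has full support for every $t<1$. The MALA kernel dominates the accepted-move sub-kernel $\kappa_s(a,da')=q_\phi(a'\mid a)\,P_{\mathrm{acc}}(a,a')\,da'$. Its density is strictly positive because the proposal is Gaussian and $Q_\phi$, $\nabla_aQ_\phi$ are finite (Assumption~\ref{ass:reg}(vi)). Hence $M_s^K\ge\kappa_s^K$, and the conditional law of $a_1'$ given $x_0$ dominates a measure with everywhere-positive density. For $t\in(0,1)$, $a_t$ is an affine bijection of $a_1'$ at fixed $x_0$, and $p_0$ is Gaussian, so every $p_t$ charges every open set. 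The network and $v^\star$ (by Assumption~\ref{ass:reg}(iv)) are both continuous on $\mathbb{R}^d\times[0,1)$, so agreeing $dt\otimes p_t$-a.e.\ forces them to agree everywhere. Exact integration then gives $\Phi_{\theta_{n+1}}=\Phi^{v^\star}$, and Lemma~\ref{lem:coupling} finishes the step.

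Alternatively, your ``flows agree'' route can be completed without full support if you put the uniqueness on the learned field. By the proof of Lemma~\ref{lem:coupling}, $p_t=\mathrm{Law}(\Phi^{v^\star}_t(x_0))$. Fubini then shows that for a.e.\ $x_0$ the two fields coincide along the $v^\star$-trajectory for a.e.\ $t$, hence for all $t<1$ by continuity. That trajectory therefore solves the ODE of $v_{\theta_{n+1}}$, and local Lipschitzness of the network makes it the learned flow; letting $t\to1$ as in Lemma~\ref{lem:coupling} identifies the endpoints. As written, though, Step 2 is a plan rather than a proof.
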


Lemma~\ref{lem:chain} shows that finite-step refinement accumulates across policy updates. Since the MALA kernel leaves the target distribution invariant, the data-processing inequality yields the following guarantee.

\begin{theorem}[Monotone decrease of KL divergence to the target]
\label{thm:improve}
Every cycle satisfies
\[
    \mathrm{KL}\bigl(
        \pi_{n+1}(\cdot\mid s)\,\|\,p_\alpha(\cdot\mid s)
    \bigr)
    \leq
    \mathrm{KL}\bigl(
        \pi_n(\cdot\mid s)\,\|\,p_\alpha(\cdot\mid s)
    \bigr).
\]
Equivalently, for the fixed critic $Q_\phi$, the entropy-regularized policy objective
\[
    \mathcal{J}(\pi\mid s)
    :=
    \mathbb{E}_{a\sim\pi(\cdot\mid s)}[Q_\phi(s,a)]
    +
    \alpha\mathcal{H}(\pi(\cdot\mid s))
\]
satisfies $\mathcal{J}(\pi_{n+1}\mid s)\geq\mathcal{J}(\pi_n\mid s)$, with both values finite. These conclusions hold for every fixed step size $\eta>0$ and every number of refinement steps $K\geq1$. Equality holds whenever $\pi_n$ is invariant for $M_s^K$. The target distribution $p_\alpha(\cdot\mid s)$ is a fixed point of the policy update.
\end{theorem}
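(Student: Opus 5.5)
We will prove the theorem by combining Lemma~\ref{lem:chain} with the invariance of $p_\alpha(\cdot\mid s)$ under the MALA kernel and the data-processing inequality for KL divergence. Fix $s$ and write $P=M_s^K$.

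\textbf{KL monotonicity.} By Lemma~\ref{lem:chain}, $\pi_{n+1}=\pi_n P$. The Metropolis--Hastings correction gives detailed balance of $M_s$ with respect to $p_\alpha(\cdot\mid s)$, so $p_\alpha M_s=p_\alpha$. Iterating $K$ times gives $p_\alpha P=p_\alpha$. Applying a Markov kernel to both arguments cannot increase KL divergence. To see this, consider the joint laws $\pi_n(da)P(a,da')$ and $p_\alpha(da)P(a,da')$. Their KL divergence equals $\mathrm{KL}(\pi_n\|p_\alpha)$ by the chain rule, since the conditionals coincide. It dominates the KL divergence of the $a'$-marginals, again by the chain rule with a nonnegative conditional term. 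Hence
\[
\mathrm{KL}(\pi_{n+1}\|p_\alpha)=\mathrm{KL}(\pi_nP\|p_\alpha P)\le \mathrm{KL}(\pi_n\|p_\alpha).
\]
This holds for every $\eta>0$ and $K\ge1$, because only invariance is used, not the step size. If $\pi_n P=\pi_n$, then $\pi_{n+1}=\pi_n$ and equality is immediate. Taking $\pi_n=p_\alpha$ gives $\pi_{n+1}=p_\alpha P=p_\alpha$, so $p_\alpha$ is a fixed point.

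\textbf{Equivalence with $\mathcal{J}$.} For $\pi$ with a density,
\[
\mathrm{KL}(\pi\|p_\alpha)=\mathbb{E}_\pi[\log\pi]-\tfrac1\alpha\mathbb{E}_\pi[Q_\phi]+\log Z_\alpha(s)=-\tfrac1\alpha\mathcal{J}(\pi\mid s)+\log Z_\alpha(s).
\]
Since $\log Z_\alpha(s)$ does not depend on $\pi$, the KL inequality is equivalent to $\mathcal{J}(\pi_{n+1}\mid s)\ge\mathcal{J}(\pi_n\mid s)$, provided every term is finite.

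\textbf{Main obstacle: finiteness and absolute continuity.} This decomposition is only legitimate when $\mathrm{KL}(\pi_n\|p_\alpha)<\infty$, $0<Z_\alpha(s)<\infty$, and both $\mathbb{E}_{\pi_n}|Q_\phi|$ and $\mathcal{H}(\pi_n)$ are finite. Otherwise one risks an $\infty-\infty$ expression. Two further gaps need care.
\begin{itemize}
\item The MALA kernel has an atom from rejections, so we must also check that $\pi_{n+1}=\pi_n P$ has a Lebesgue density. This holds because the accepted part is absolutely continuous and the rejected part is $\pi_n$ reweighted by the rejection probability, which is a density whenever $\pi_n$ is one.
\item Finiteness must propagate. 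If $\mathrm{KL}(\pi_0\|p_\alpha)<\infty$, the monotonicity above gives $\mathrm{KL}(\pi_{n+1}\|p_\alpha)<\infty$ for all $n$. We still need $\mathbb{E}_{\pi_{n+1}}|Q_\phi|<\infty$ to split the KL into the two finite terms of $\mathcal{J}$. I would obtain this from the regularity conditions in Assumption~\ref{ass:reg}, for example $Q_\phi$ bounded above with controlled growth, together with integrability of $\pi_0$, arguing by induction over cycles.
\end{itemize}
I expect this bookkeeping to be the delicate part. The inequality itself follows directly from data processing.
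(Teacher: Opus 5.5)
Your proposal is correct and follows essentially the same route as the paper: Lemma~\ref{lem:chain}, invariance $p_\alpha M_s^K=p_\alpha$, the data-processing inequality, and the identity $\mathrm{KL}(\pi\|p_\alpha)=\log Z_\alpha-\mathcal{J}(\pi)/\alpha$, with finiteness of $\mathrm{KL}(\pi_n\|p_\alpha)$ propagated by data processing from Assumption~\ref{ass:reg}(v). The step you flag as delicate needs no induction or growth condition, because Assumption~\ref{ass:reg}(ii) directly posits $\mathbb{E}_{\pi_n}|Q_\phi|<\infty$ for every $n$ (a separate hypothesis, not a consequence of finite KL), so finiteness of $\mathcal{H}(\pi_n)$ and $\mathcal{J}(\pi_n)$ follows by rearranging the identity; likewise your separate density argument is redundant, since finite KL already gives $\pi_n\ll p_\alpha$, and $p_\alpha$ has a Lebesgue density.
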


These results show that self-target FM preserves the distributional effect of refinement, allowing finite-step MALA refinement to accumulate across policy updates and monotonically improve the entropy-regularized policy objective for the fixed critic. This does not require MALA to converge within each cycle. Theorem~\ref{thm:improve} alone does not establish convergence to the target. If $M_s$ additionally satisfies conditions ensuring convergence of the MALA chain to $p_\alpha(\cdot\mid s)$ \citep{roberts1996exponential}, Lemma~\ref{lem:chain} implies that the policy distribution also converges to the target. All proofs and the extension to bounded actions through the coordinate transformation are provided in Appendix~\ref{app:proof}.

\section{Experiments}
\label{sec:experiments}
% We first examine whether alternating Q-guided sample refinement and self-target FM can capture complex multimodal structure across six synthetic two-dimensional target distributions with diverse geometries without mode collapse. We then evaluate RFPO on continuous-control benchmarks.

\begin{figure*}[t]
    \centering
    \begin{subfigure}[t]{0.32\textwidth}
        \centering
        \includegraphics[width=\linewidth]
        {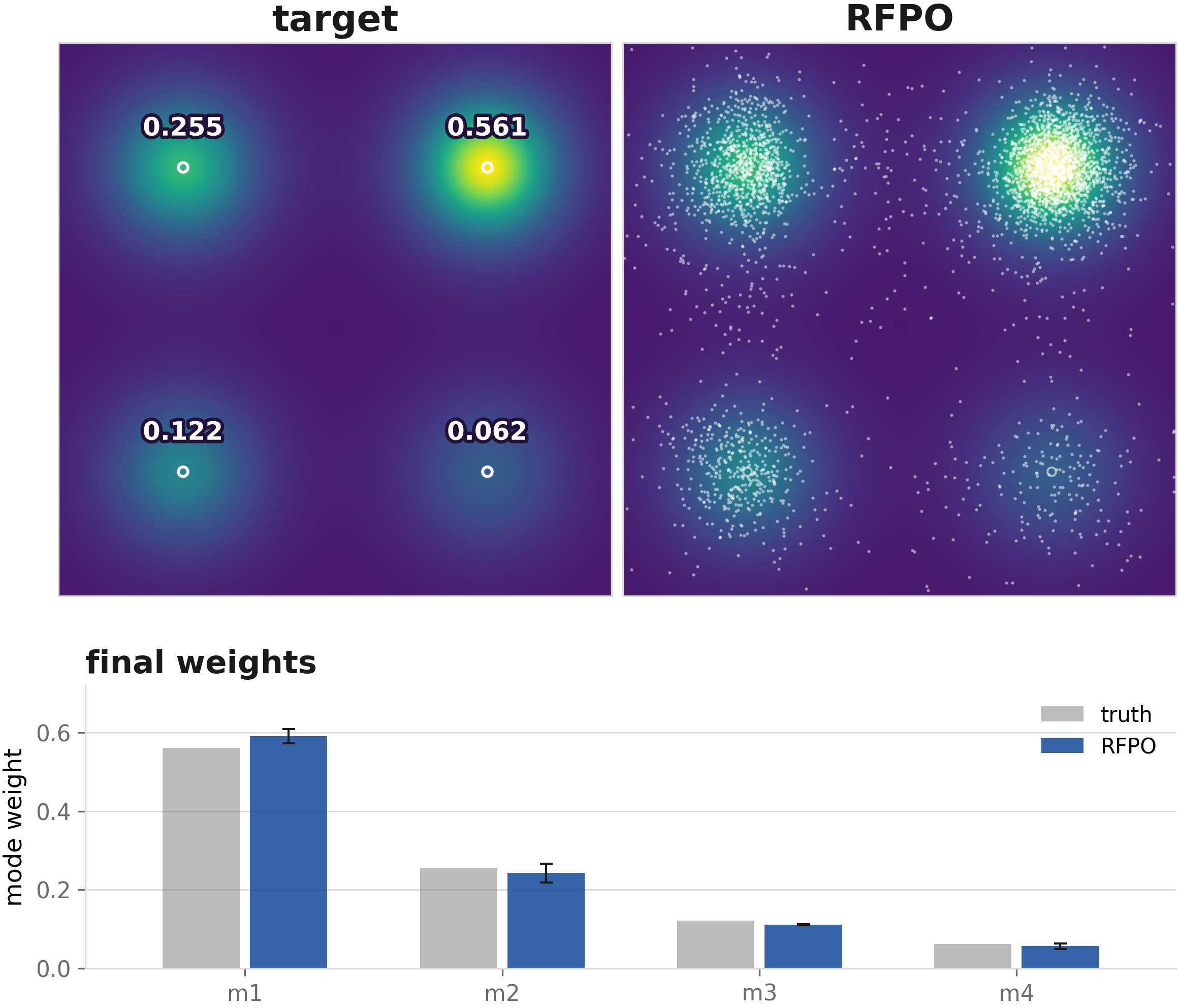}
        \caption{Isotropic mixture.}
        \label{fig:toy_2d_compact}
    \end{subfigure}
    \hfill
    \begin{subfigure}[t]{0.32\textwidth}
        \centering
        \includegraphics[width=\linewidth]
        {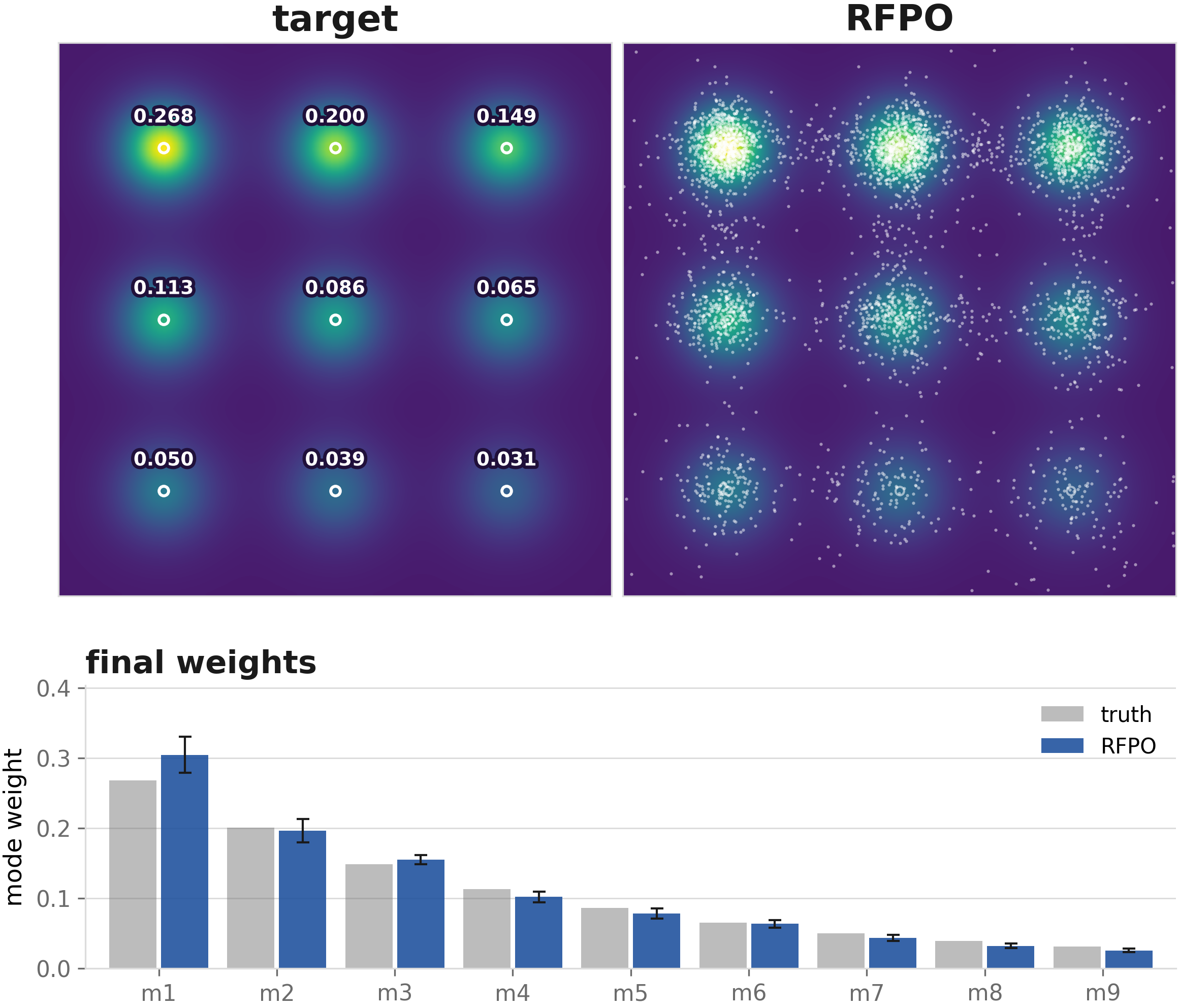}
        \caption{Grid mixture.}
        \label{fig:toy_grid_compact}
    \end{subfigure}
    \hfill
    \begin{subfigure}[t]{0.32\textwidth}
        \centering
        \includegraphics[width=\linewidth]
        {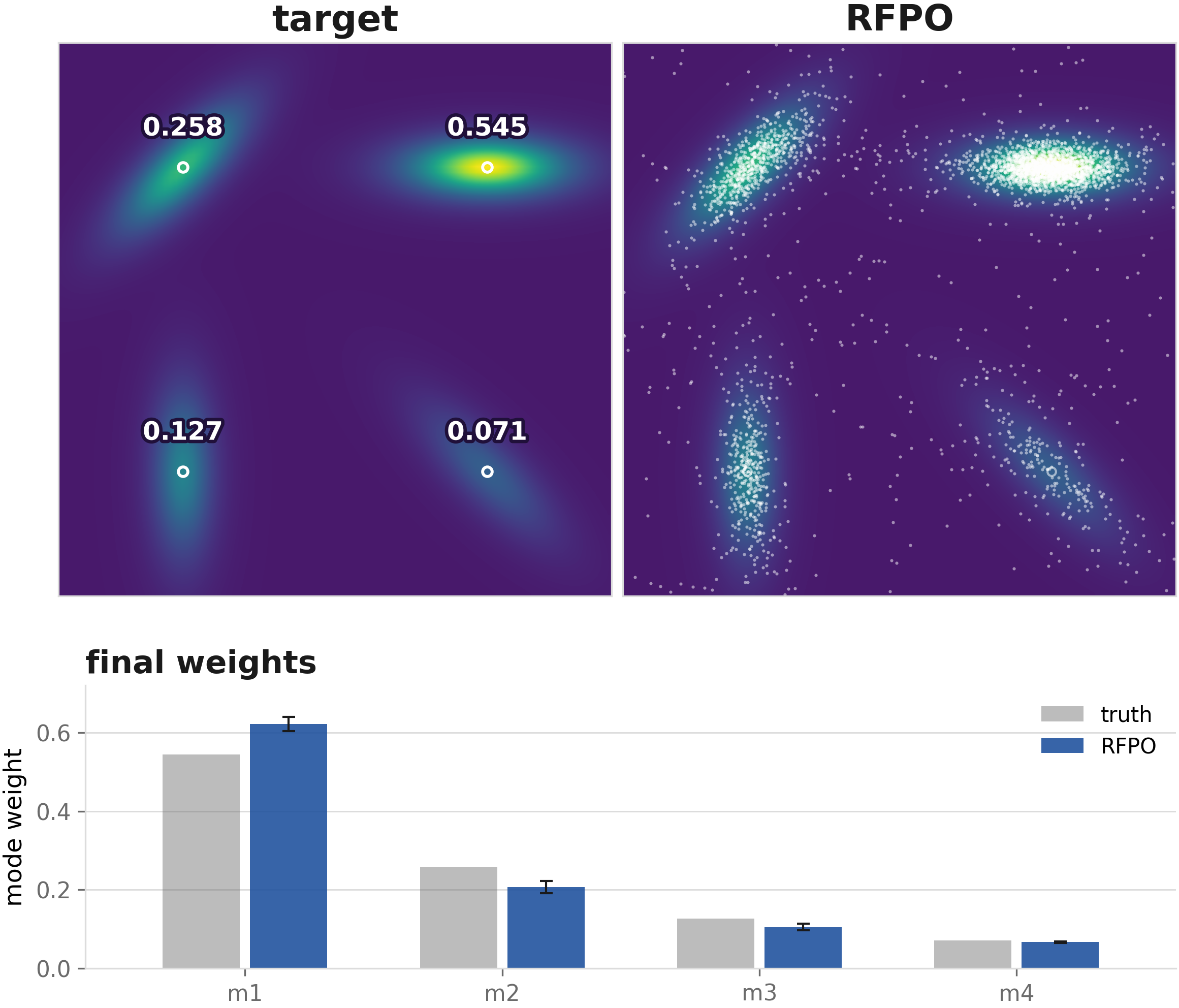}
        \caption{Anisotropic mixture.}
        \label{fig:toy_aniso_compact}
    \end{subfigure}
    \medskip
    \begin{subfigure}[t]{0.32\textwidth}
        \centering
        \includegraphics[width=\linewidth]
        {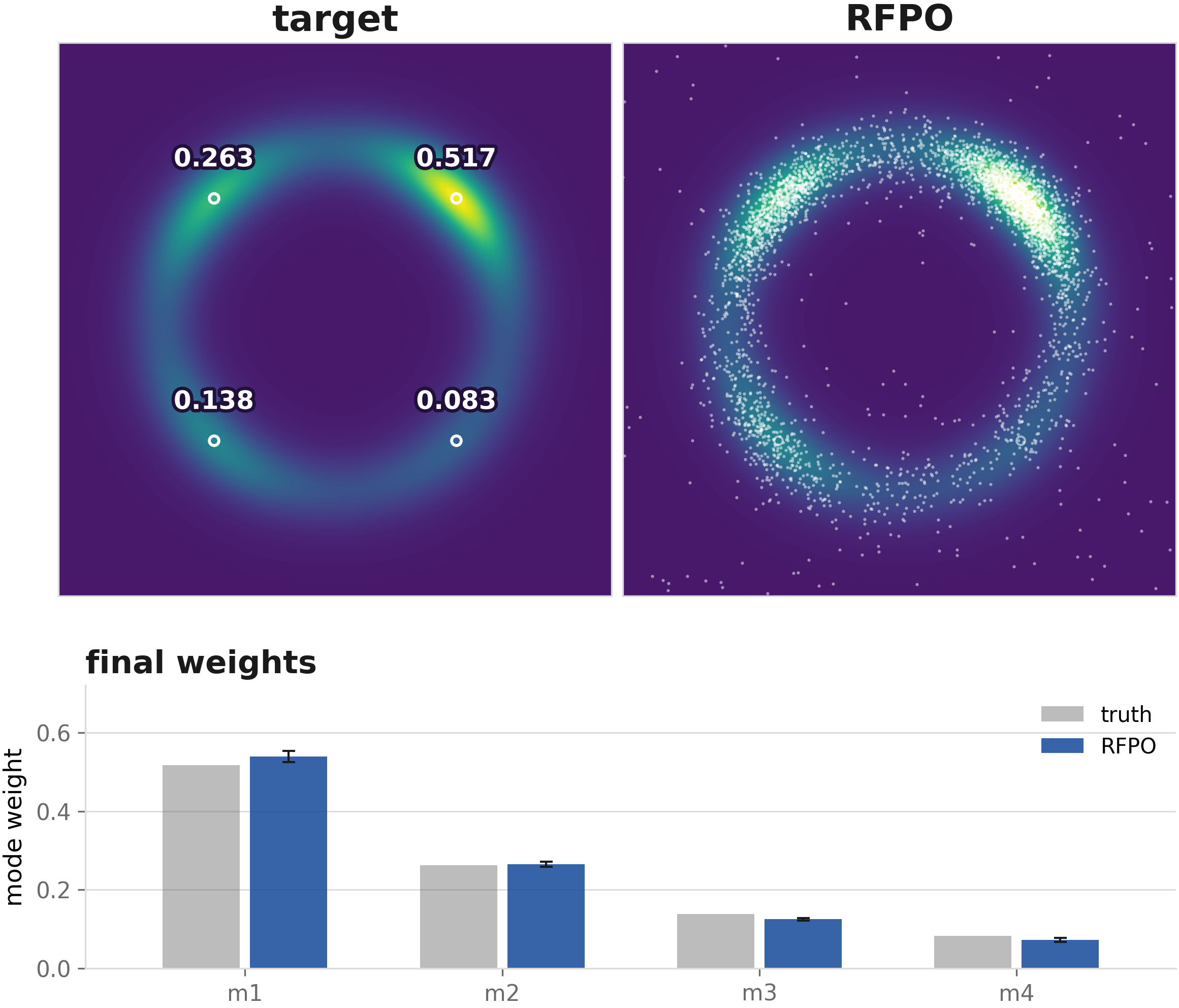}
        \caption{Ring-shaped mixture.}
        \label{fig:toy_ring_compact}
    \end{subfigure}
    \hfill
    \begin{subfigure}[t]{0.32\textwidth}
        \centering
        \includegraphics[width=\linewidth]
        {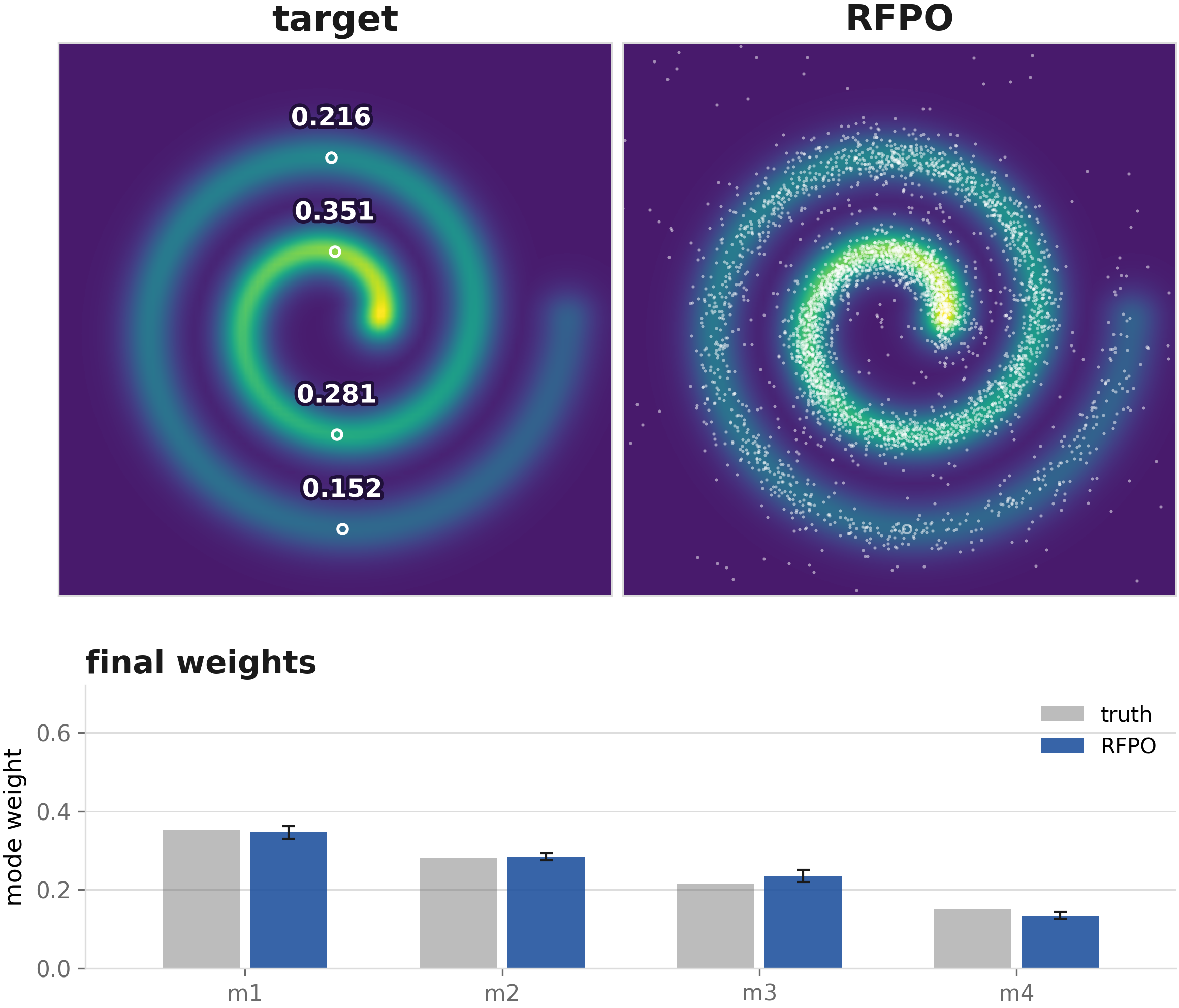}
        \caption{Spiral-shaped mixture.}
        \label{fig:toy_spiral_compact}
    \end{subfigure}
    \hfill
    \begin{subfigure}[t]{0.32\textwidth}
        \centering
        \includegraphics[width=\linewidth]
        {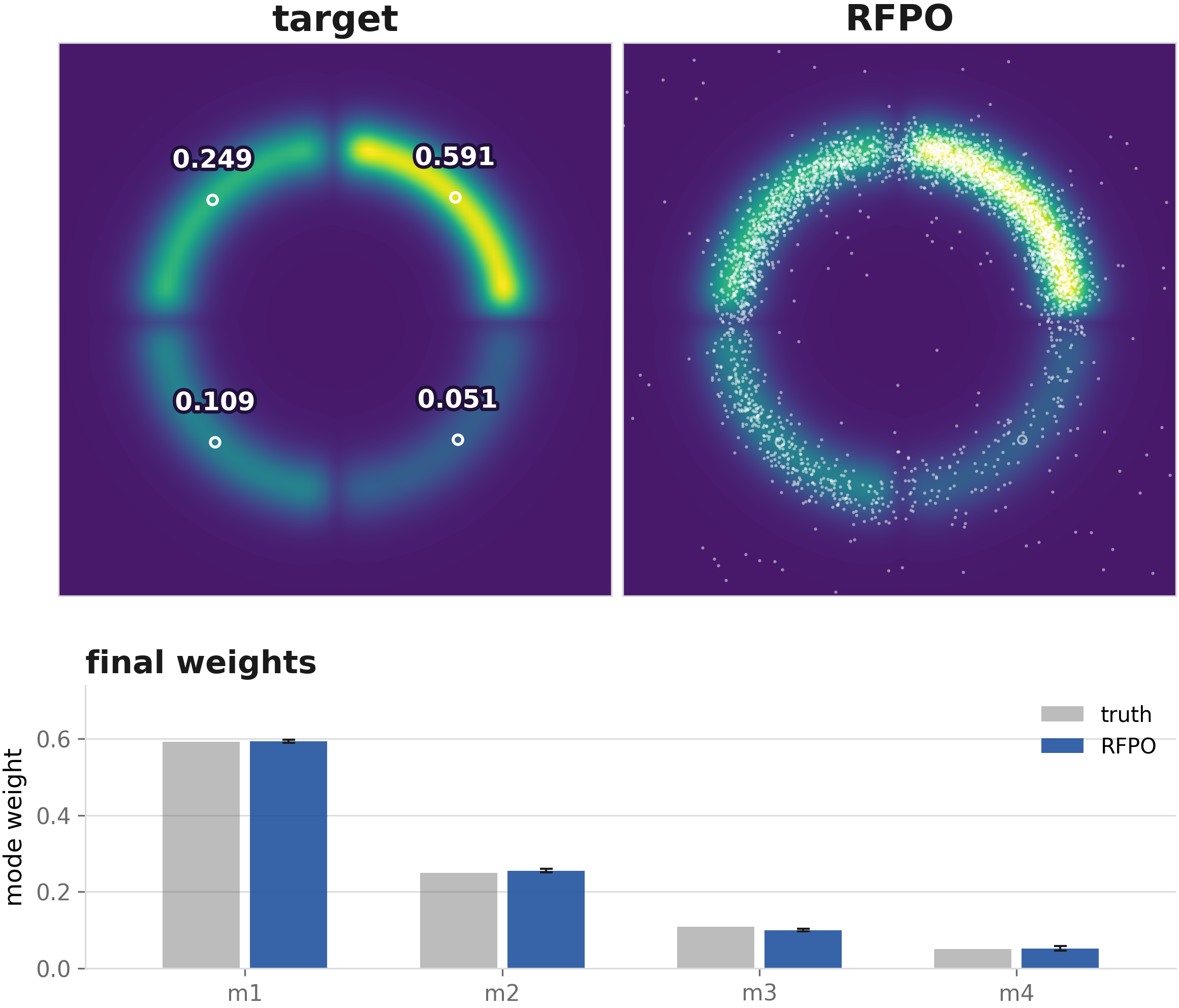}
        \caption{Arc-shaped mixture.}
        \label{fig:toy_arc_compact}
    \end{subfigure}
    \caption{In each subfigure, the upper-left panel shows the target density, with white markers indicating predefined components and annotations giving their ground-truth component-region masses. The upper-right panel overlays learned-policy samples on the target density. The lower panel compares ground-truth component-region masses (gray) with sample estimates (blue); error bars indicate the standard deviation across four random seeds.}
    \label{fig:toy_examples_compact}
\end{figure*}

\subsection{Learning Multimodal Distributions with Diverse Geometries}
\label{sec:multimodal_distributions}
We examine whether alternating Q-guided sample refinement and self-target FM can learn multimodal target distributions without collapsing onto a single high-value region. We consider the six two-dimensional target geometries described in Appendix~\ref{app:shapes}, including separated Gaussian-shaped components, anisotropic components, and nonlinear ridge structures with unequal prescribed component heights. Our method accesses each target only through its $Q$-function and learns from the self-generated refined actions without using samples drawn directly from the target distribution. We perform 3,000 cycles, each consisting of sample refinement followed by self-target FM. The hyperparameters for these experiments are provided in Table~\ref{tab:hyperparameters_toy} in Appendix~\ref{app:hyperparameters}.

As shown in Figure~\ref{fig:toy_examples_compact}, the learned policy generates samples across multiple high-density regions and captures the overall geometry of each target, including spatially separated modes and extended structures such as rings and spirals. These results demonstrate that alternating Q-guided sample refinement and self-target FM captures complex multimodal structure across the evaluated targets without collapsing onto a single high-value region. The full results, including the evolution of the policy-generated action distributions and component-region masses over the 3,000 cycles, are provided in Appendix~\ref{app:toy_full_results}.

\subsection{Performance on Continuous-Control Benchmarks}
\label{sec:continuous_control}
We evaluate RFPO on six continuous-control environments from MuJoCo \citep{todorov2012mujoco}. The baselines include diffusion-based online RL methods DPMD and SDAC \citep{ma2025efficient}, as well as DIPO \citep{yang2023policy}, which refines replay-buffer actions through Q-gradient ascent before training a diffusion policy. We also include RFM \citep{li2026reverse} as a Q-guided flow-policy baseline and SAC \citep{haarnoja2018soft} as a representative Gaussian-policy baseline. All methods are trained for 1M environment steps. Evaluation uses common environment seeds and episode counts while following each method's standard action-generation convention; full details are provided in Appendix~\ref{app:training_details}. The hyperparameters used for RFPO are provided in Table~\ref{tab:hyperparameters}. Further details on the baselines are provided in Appendix~\ref{app:baselines}.
%남은 baseline 우선순위:DACER, QVPO, QSM

The learning curves for RFPO and the baselines are shown in Figure~\ref{fig:learning_curves}, and the final returns in Table~\ref{tab:return}. RFPO matches or outperforms representative online RL baselines in almost every environment. These results demonstrate that the proposed Q-guided training procedure effectively trains a flow policy in online RL without requiring direct samples from the energy-based target distribution. 
% Together with the preceding synthetic experiments, these findings support the applicability of RFPO's refinement and self-target FM procedure to both fixed target distributions and online RL with a learned Q-function.

\begin{table}[t]
\centering
\caption{Final smoothed return on six MuJoCo tasks, averaged over five seeds.}
\label{tab:return}
\footnotesize
\setlength{\tabcolsep}{2.5pt}
\begin{tabular}{lrrrrrr}
\toprule
Method & HalfCheetah & Walker2d & Hopper & Ant & Swimmer & \shortstack[r]{Humanoid\\Standup} \\
\midrule
\textbf{RFPO (ours)} & \textbf{13{,}269 $\pm$ 516} & \textbf{5{,}456 $\pm$ 262} & 2{,}110 $\pm$ 122 & \underline{5{,}854 $\pm$ 501} & \textbf{343 $\pm$ 5} & \textbf{158.4 $\pm$ 5.1} \\
SAC & 10{,}228 $\pm$ 942 & 4{,}710 $\pm$ 379 & \underline{2{,}969 $\pm$ 683} & 4{,}061 $\pm$ 587 & \underline{340 $\pm$ 7} & 149.5 $\pm$ 13.0 \\
SDAC & 12{,}663 $\pm$ 796 & 3{,}535 $\pm$ 654 & 2{,}026 $\pm$ 737 & $-$41 $\pm$ 78 & 291 $\pm$ 128 & \underline{153.7 $\pm$ 2.6} \\
DPMD & 11{,}274 $\pm$ 848 & 4{,}377 $\pm$ 304 & \textbf{3{,}038 $\pm$ 292} & 5{,}746 $\pm$ 350 & 190 $\pm$ 134 & 125.1 $\pm$ 14.6 \\
DIPO & \underline{12{,}928 $\pm$ 265} & \underline{4{,}939 $\pm$ 441} & 2{,}466 $\pm$ 684 & \textbf{6{,}033 $\pm$ 208} & 295 $\pm$ 106 & 147.2 $\pm$ 16.9 \\
RFM & 9{,}705 $\pm$ 764 & 3{,}742 $\pm$ 316 & 1{,}583 $\pm$ 219 & $-$7 $\pm$ 1 & 277 $\pm$ 125 & 147.7 $\pm$ 16.1 \\
\bottomrule
\end{tabular}

{\footnotesize \textbf{Bold}: best. \underline{Underline}: second best. HumanoidStandup returns are in units of $10^3$.}
\end{table}

\begin{figure*}[t]
    \centering
    \includegraphics[width=0.9\textwidth]{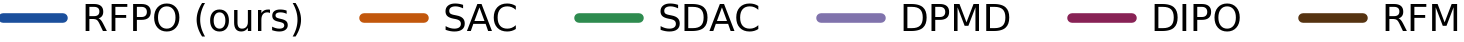}
    \par\smallskip

    % Row 1
    \begin{subfigure}[t]{0.32\textwidth}
        \centering
        \includegraphics[width=\linewidth]
        {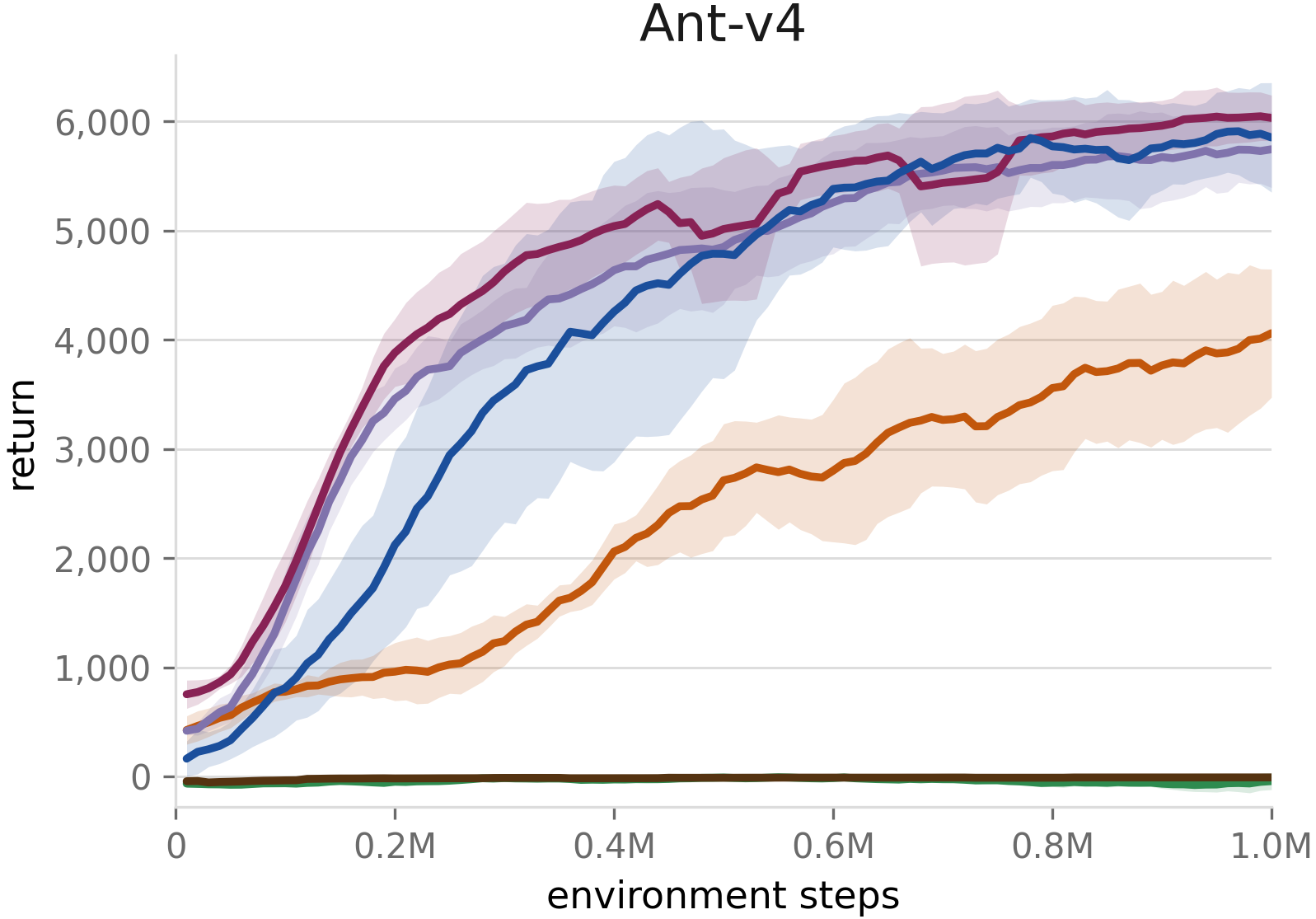}
    \end{subfigure}
    \hfill
    \begin{subfigure}[t]{0.32\textwidth}
        \centering
        \includegraphics[width=\linewidth]
        {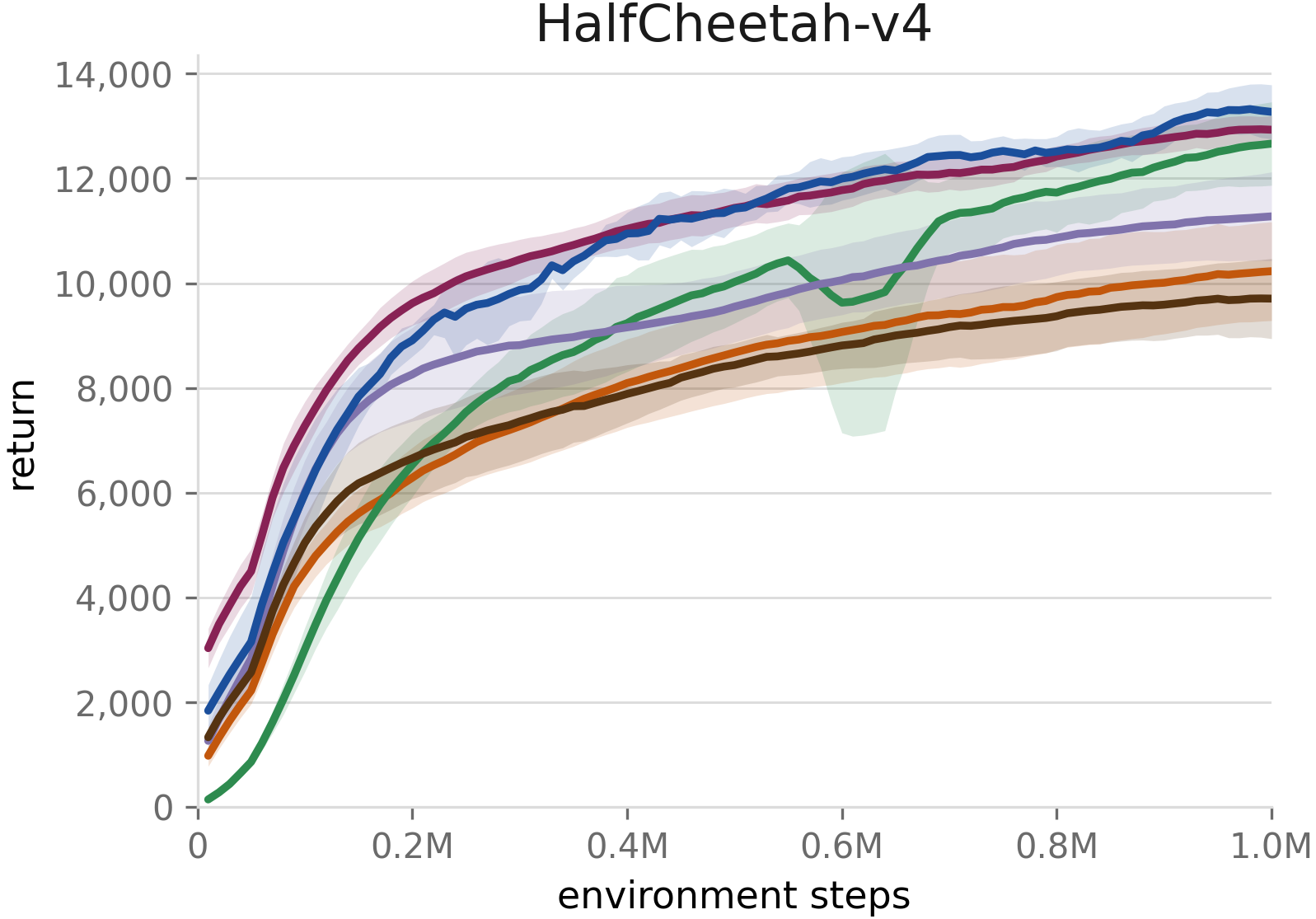}
    \end{subfigure}
    \hfill
    \begin{subfigure}[t]{0.32\textwidth}
        \centering
        \includegraphics[width=\linewidth]
        {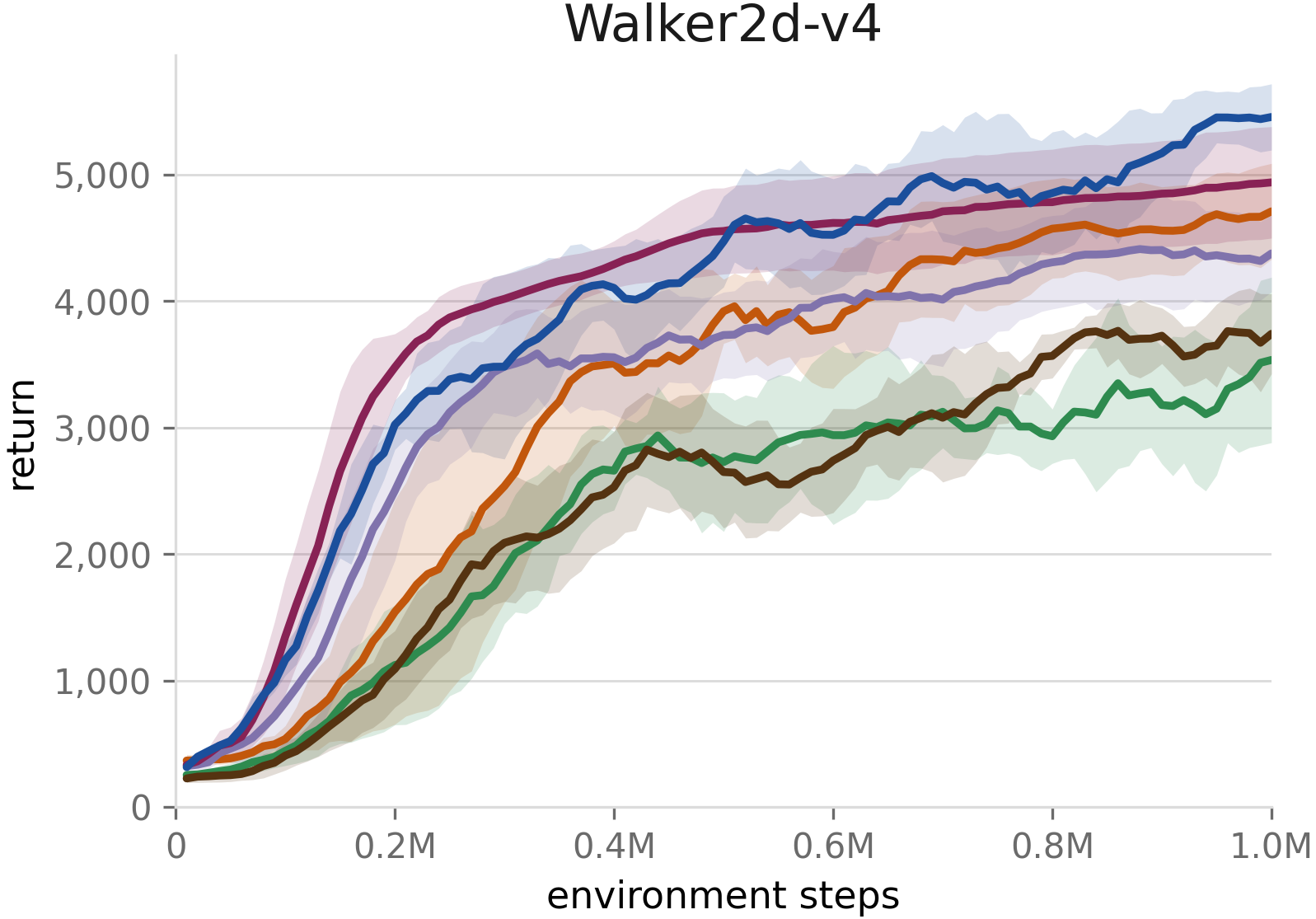}
    \end{subfigure}

    \medskip

    % Row 2
    \begin{subfigure}[t]{0.32\textwidth}
        \centering
        \includegraphics[width=\linewidth]
        {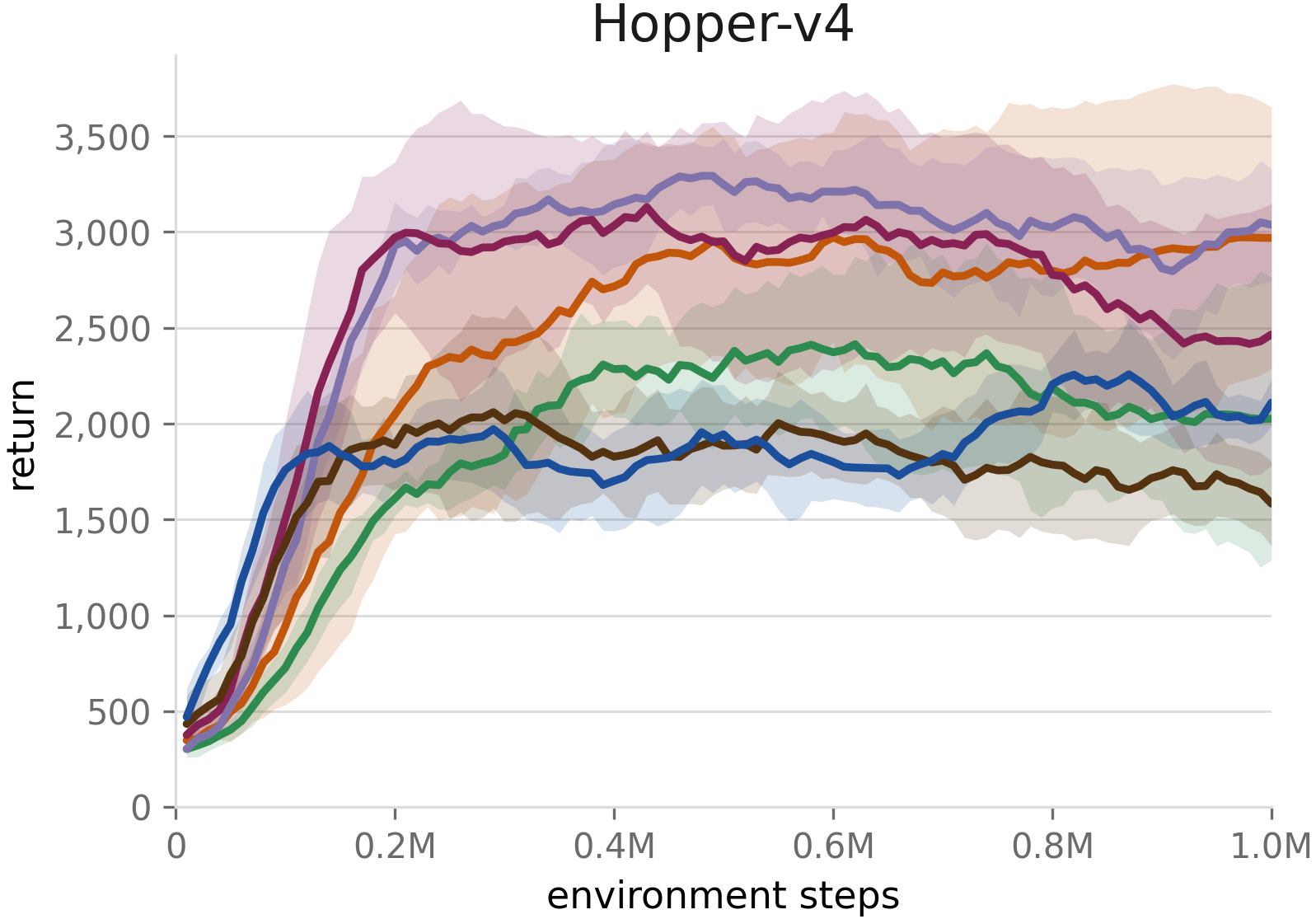}
    \end{subfigure}
    \hfill
    \begin{subfigure}[t]{0.32\textwidth}
        \centering
        \includegraphics[width=\linewidth]
        {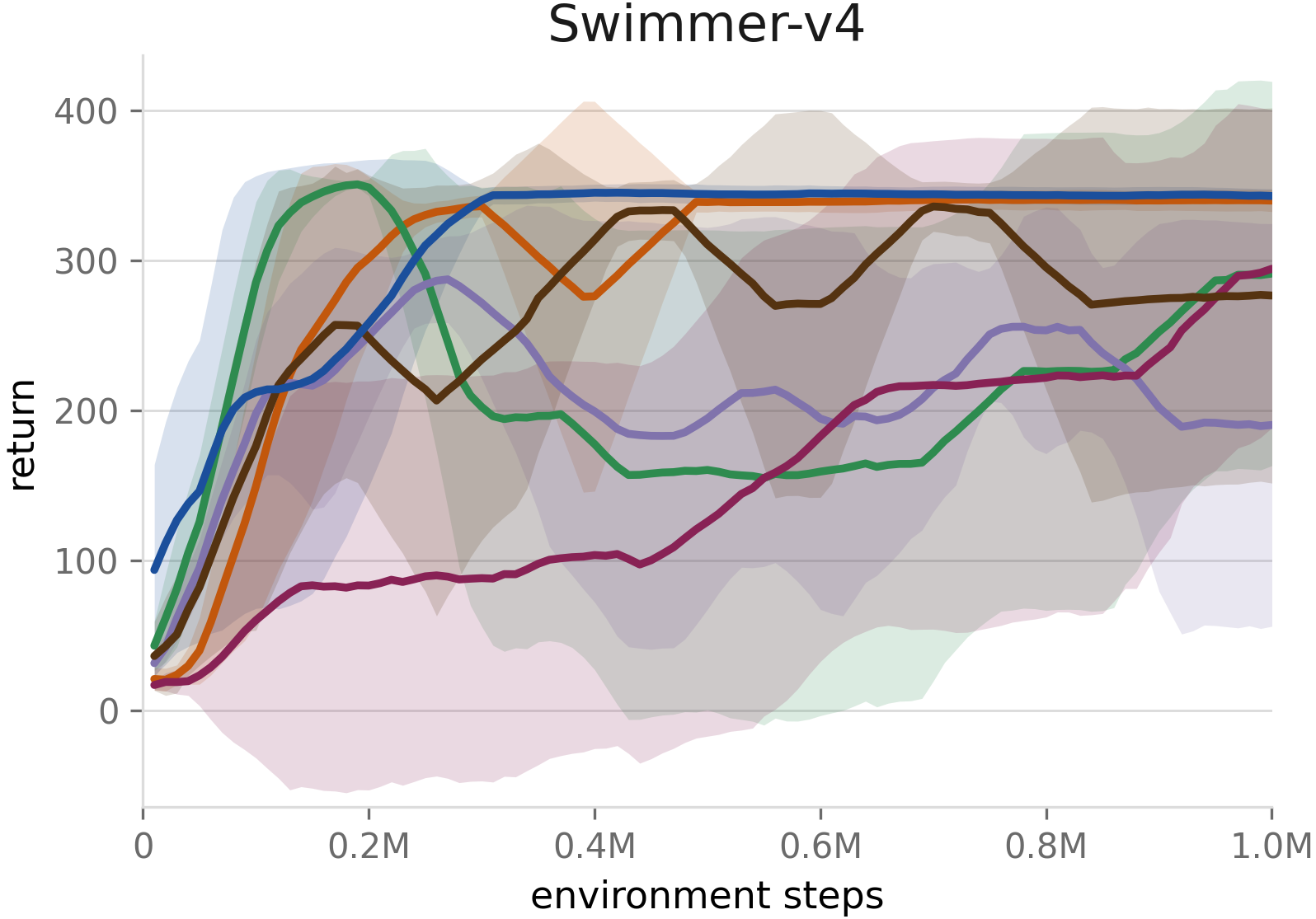}
    \end{subfigure}
    \hfill
    \begin{subfigure}[t]{0.32\textwidth}
        \centering
        \includegraphics[width=\linewidth]
        {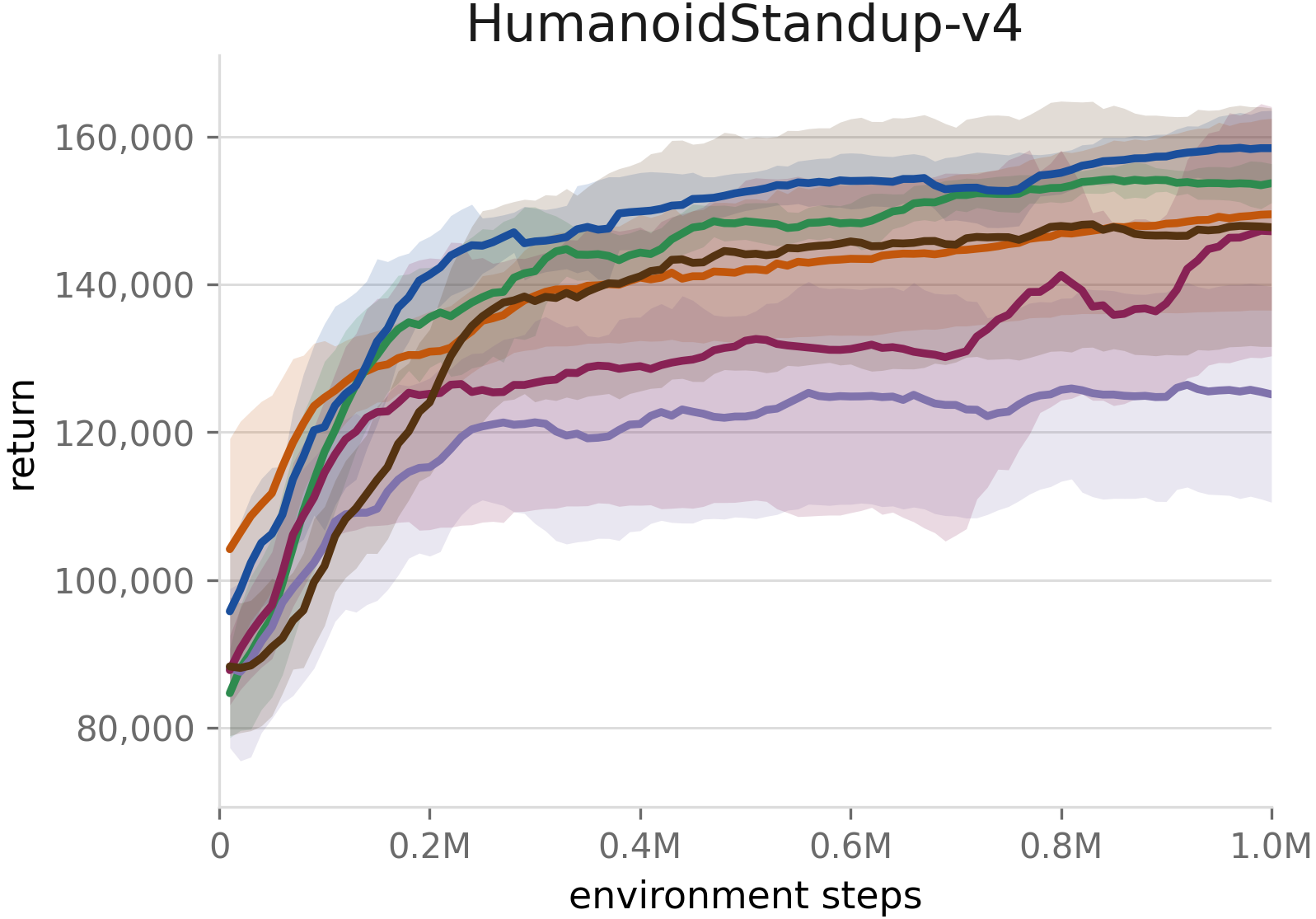}
    \end{subfigure}

    \caption{Learning curves on six continuous-control tasks. Each curve represents the average return over 5 random seeds, with the shaded area indicating one standard deviation from the mean.}
    \label{fig:learning_curves}
\end{figure*}

\section{Conclusion}
We introduce RFPO, a novel framework for training an expressive flow policy in online RL without requiring direct samples from the energy-based target distribution induced by the Q-function. RFPO uses the Q-function to refine actions generated by the current flow policy and uses the refined actions as self-generated targets for self-target FM. Through this iterative cycle, the policy distribution progressively moves toward high-value action regions while retaining its ability to represent multiple action modes.

We further provide a theoretical analysis of the distributional dynamics induced by RFPO. Under exact self-target FM optimization and exact flow integration, each cycle of sample refinement and self-target FM corresponds to applying $K$ MALA transitions to the current policy distribution, establishing the energy-based target distribution as a fixed point of the iterative procedure. Experiments on six synthetic two-dimensional target distributions with diverse geometries demonstrate that alternating Q-guided sample refinement and self-target FM captures complex multimodal structure without collapsing onto a single mode. Across six continuous-control environments in MuJoCo, RFPO further matches or outperforms representative online RL baselines in almost every environment.

\paragraph{Limitations.}
% While RFPO shows promising results on MuJoCo benchmarks and captures multimodal target distributions, several limitations remain to be addressed. First, its iterative updates introduce additional training cost because constructing each self-generated target requires a finite number of MALA transitions. Nevertheless, the computational analysis in Appendix~\ref{app:compute} indicates that RFPO requires substantially less network computation per update than most of evaluated generative-policy baselines.

First, our theoretical analysis considers an idealized setting in which the critic is held fixed, the self-target FM problem is solved exactly, and the flow ODE is integrated without numerical error. In practical online training, the critic continually evolves and the policy is updated using finite optimization steps. These assumptions isolate the distributional effect of the proposed refinement--learning cycle and provide an exact reference for understanding the algorithm. Extending the analysis to account for approximate optimization and a moving target distribution remains an important direction for future work.

Second, RFPO includes several design choices, including the refinement kernel, the acceptance correction, and initialization from current-policy samples. The current experiments do not fully isolate the contribution of each component. Nevertheless, the central idea of RFPO is to learn a policy toward the target distribution by iteratively alternating between MCMC-based sample refinement and self-target FM. A systematic ablation of these design choices and their combinations may further improve the performance and efficiency of RFPO and remains an important direction for future work.

\section*{AI Use Statement}
We used generative AI tools for brainstorming, refining the conceptual framing, and obtaining feedback on research methodology and experimental design. We also used these tools to discuss theoretical arguments, assist with proof development and checking, search for and summarize relevant literature, and assist with method implementation. For manuscript preparation, we used generative AI tools to assist with translation, organization, drafting, and editing for clarity and readability. We did not use generative AI tools to generate synthetic datasets, clean or reformat datasets, propose or refine research hypotheses, formulate the mathematical claims presented in this work, or interpret experimental results. AI-assisted proof checking was used as a supplement to the authors' own verification. The authors reviewed and revised the AI-assisted material and take responsibility for the final manuscript, implementation, theoretical claims, proofs, and conclusions.

\subsection*{Reproducibility statement}
We provide implementation and experimental details sufficient to reproduce the proposed method and evaluate its claims. The complete RFPO procedure, including critic learning, target-critic updates, termination handling, stop-gradient treatment of refined targets, and the order of refinement and flow-matching updates, is specified in Appendix~\ref{app:training_details}. The bounded-action implementation, including the tanh transformation, induced target density, and MALA score, is detailed in Appendix~\ref{app:bounded_actions}. All hyperparameters for the synthetic and MuJoCo experiments are reported in Tables~\ref{tab:hyperparameters_toy} and \ref{tab:hyperparameters}, respectively. The evaluation protocol, including the evaluation frequency, number and seeds of evaluation episodes, action-generation conventions, and learning-curve smoothing procedure, is provided in Appendix~\ref{app:training_details}. The six synthetic target distributions and the component assignments used for region-mass evaluation are specified in Appendix~\ref{app:shapes}, and the full synthetic results across training cycles are reported in Appendix~\ref{app:toy_full_results}. Baseline configurations are described in Appendix~\ref{app:baselines}. Finally, the assumptions and complete proofs for the theoretical results in Section~\ref{sec:theory} are provided in Appendix~\ref{app:proof}.

% \subsection*{Reproducibility statement}

% (This section is \textbf{recommended} and does not count toward the page limit.)

% It is important that the work published in ICLR is reproducible. Authors are
% strongly encouraged to include a paragraph-long Reproducibility Statement at the
% end of the main text (before references) to discuss the efforts that have been
% made to ensure reproducibility. This paragraph should not itself describe
% details needed for reproducing the results, but rather reference the parts of
% the main paper, appendix, and supplemental materials that will help with
% reproducibility. For example, for novel models or algorithms, a link to an
% anonymous downloadable source code can be submitted as supplementary materials;
% for theoretical results, clear explanations of any assumptions and a complete
% proof of the claims can be included in the appendix; for any datasets used in
% the experiments, a complete description of the data processing steps can be
% provided in the supplementary materials. Each of the above are examples of
% things that can be referenced in the reproducibility statement.

\bibliography{iclr2027_conference}
\bibliographystyle{iclr2027_conference}

\clearpage
\appendix

\section{Comparison with Direct Q Maximization}
\label{app:one_step_interpretation}

A straightforward approach to training a flow policy toward the energy-based target distribution induced by the Q-function is to augment the flow-matching objective with a Q-maximization term. For $\lambda_Q>0$, this objective takes the form
\begin{equation}
    \mathcal{L}_{\mathrm{SFM+Q}}(\theta)
    =
    \mathcal{L}_{\mathrm{SFM}}(\theta)
    -
    \lambda_Q
    \mathbb{E}_{s,x_0}
    \left[
        Q_\phi\bigl(s,\Phi_\theta(s,x_0)\bigr)
    \right].
    \nonumber
\end{equation}
For a fixed $(s,x_0)$ pair, the gradient contribution of the Q-maximization term is
\begin{equation}
    -\lambda_Q
    \left(
        \frac{\partial\Phi_\theta(s,x_0)}{\partial\theta}
    \right)^\top
    \nabla_a Q_\phi\bigl(s,\Phi_\theta(s,x_0)\bigr).
    \nonumber
\end{equation}
Thus, directly maximizing the Q-function requires backpropagating the critic gradient through the flow sampler. For an iterative flow sampler, the flow-sampler sensitivity $\partial\Phi_\theta(s,x_0)/\partial\theta$ contains parameter-sensitivity terms from each ODE-solver step, multiplied by products of state Jacobians. Consequently, the scale of the critic gradient reaching the flow parameters can be attenuated or amplified by the learned velocity field along the sampled trajectory.

Figure~\ref{fig:amplification} illustrates these sampler-dependent state-Jacobian factors on flows trained with the procedure of Section~\ref{sec:multimodal_distributions}, run on the isotropic mixture for $400$ cycles at five temperatures with one seed each. Let $x_0,\ldots,x_N$ denote an Euler trajectory with step size $\Delta t$, and define
\begin{equation}
    J_j
    =
    I+
    \Delta t\,
    \frac{\partial v_\theta(s,x_j,t_j)}{\partial x_j},
    \qquad
    A_i
    =
    \left\|
        J_{N-1}\cdots J_i
    \right\|_2.
    \nonumber
\end{equation}
Here, $A_i$ measures how the flow dynamics scale a perturbation entering at solver step $i$. Products of this form appear in the flow-sampler sensitivity in the direct Q-maximization gradient.

As shown in Figure~\ref{fig:amplification}(a), the sensitivity varies substantially along a single trajectory. In particular, perturbations entering early can be strongly attenuated, whereas later perturbations can remain nearly unchanged or be mildly amplified. Figure~\ref{fig:amplification}(b) shows that this scale also varies across temperatures when all other settings are held fixed. Across the five temperatures, $A_i$ ranges from $0.08$ to $1.31$. These measurements are not gradient-norm or optimization-stability measurements. Rather, they illustrate the sampler-dependent Jacobian factors through which a direct Q-maximization update must propagate critic gradients.

\begin{figure}[h]
    \centering
    \includegraphics[width=\textwidth]{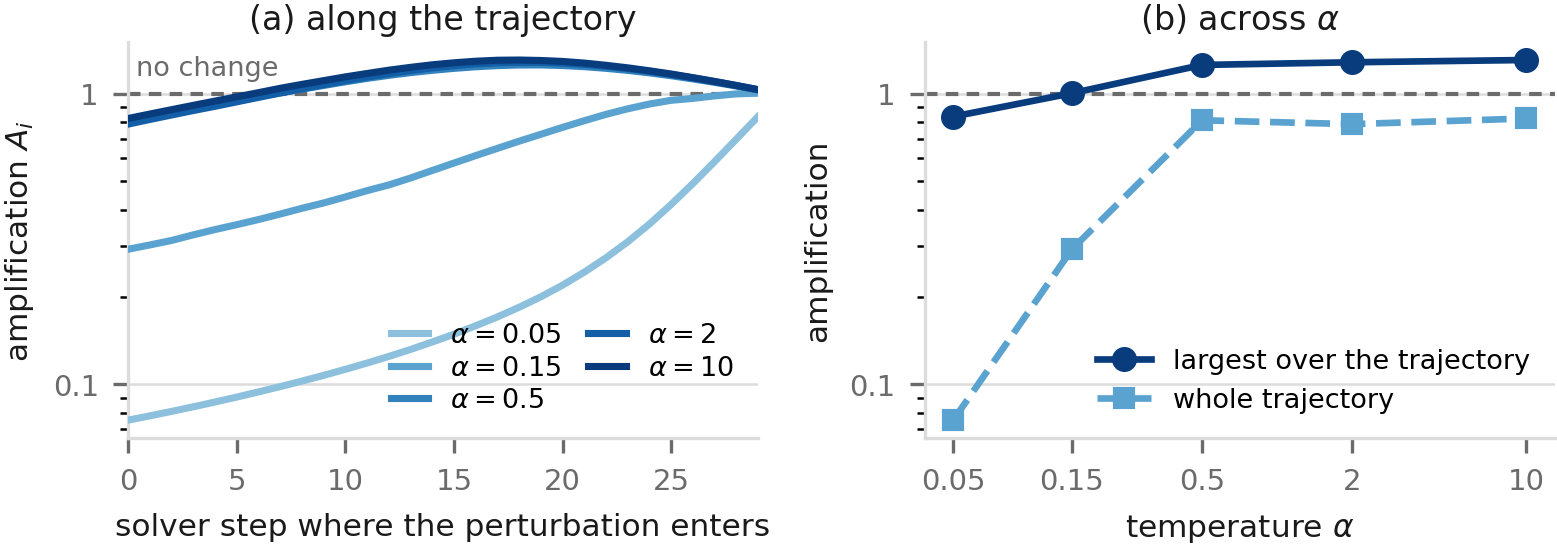}
    \caption{State-Jacobian sensitivity factors along flows trained with the procedure of Section~\ref{sec:multimodal_distributions} for $400$ cycles, one seed per temperature. (a)~$A_i$ measures the scaling of a perturbation entering at solver step $i$; the dashed line indicates no change in norm. Contributions entering early can be strongly attenuated, whereas later contributions remain near their original scale or are mildly amplified. (b)~The largest value of $A_i$ along a trajectory and the whole-trajectory factor $A_0$ are shown across temperatures. These quantities illustrate a sampler-sensitivity component of the direct Q-maximization gradient, rather than an observed optimization failure.}
    \label{fig:amplification}
\end{figure}

In contrast, RFPO uses the Q-function to construct fixed targets for self-target FM. To illustrate this distinction, we consider a simplified deterministic one-step refinement. We omit the Gaussian proposal noise and assume that the proposed action is accepted. This simplification is used only for interpretation and is not the refinement procedure used by RFPO. Under this simplification, one refinement step gives
\begin{equation}
    a_1'
    =
    a_1
    +
    \frac{\eta}{\alpha}
    \nabla_a Q_\phi(s,a_1).
    \nonumber
\end{equation}
The target velocity used by self-target FM is therefore
\begin{equation}
    a_1'-x_0
    =
    (a_1-x_0)
    +
    \frac{\eta}{\alpha}
    \nabla_a Q_\phi(s,a_1).
    \nonumber
\end{equation}
Thus, refinement incorporates the Q-gradient by shifting the velocity-regression target. For
\begin{equation}
    a_t'
    =
    (1-t)x_0+t a_1',
    \nonumber
\end{equation}
the per-pair self-target FM loss is
\begin{equation}
    \ell_{\mathrm{SFM}}
    =
    \left\|
        v_\theta(s,a_t',t)
        -
        \left[
            (a_1-x_0)
            +
            \frac{\eta}{\alpha}
            \nabla_a Q_\phi(s,a_1)
        \right]
    \right\|_2^2.
    \nonumber
\end{equation}
During the self-target FM update, $a_1$, $a_1'$, $a_t'$, and $\nabla_a Q_\phi(s,a_1)$ are treated as fixed with respect to $\theta$. The flow-parameter gradient is consequently
\begin{equation}
    \nabla_\theta \ell_{\mathrm{SFM}}
    =
    2
    \left(
        \frac{\partial v_\theta(s,a_t',t)}{\partial\theta}
    \right)^\top
    \left[
        v_\theta(s,a_t',t)
        -
        \left(
            (a_1-x_0)
            +
            \frac{\eta}{\alpha}
            \nabla_a Q_\phi(s,a_1)
        \right)
    \right].
    \nonumber
\end{equation}
The Q-gradient therefore appears only as part of the fixed regression target. In particular, the update contains no derivative through $\Phi_\theta(s,x_0)$ and no critic gradient backpropagated through the flow sampling procedure. Thus, RFPO uses Q-gradient information to modify the velocity target rather than directly optimizing the Q-function through the flow parameters, avoiding this sampler-dependent gradient path. This comparison does not imply that direct Q maximization necessarily diverges; rather, it identifies a gradient path that RFPO is designed to avoid.

\section{Bounded Action Implementation}
\label{app:bounded_actions}

The main text describes RFPO directly in the action space. In the bounded-action implementation, both the flow and MALA refinement operate on a variable $u\in\mathbb{R}^{d}$, which is mapped to an environment action through a tanh transformation. Let $a_{\min},a_{\max}\in\mathbb{R}^{d}$ denote the componentwise action bounds, with $a_{\min,j}<a_{\max,j}$, and define the action half-range $\Delta a=(a_{\max}-a_{\min})/2$. The transformation is
\begin{equation}
    T(u)
    =
    \frac{a_{\max}+a_{\min}}{2}
    +
    \Delta a\odot\tanh(u),
    \label{eq:bounded_action_transform}
\end{equation}
where $\tanh$ is applied elementwise and $\odot$ denotes elementwise multiplication. The flow generates an endpoint $u_1=\Phi_\theta(s,x_0)$ from $x_0\sim\mathcal{N}(0,I_d)$, and the corresponding action is $a_1=T(u_1)$. Thus, the flow map produces $u_1$ in this implementation, while $T$ enforces the action bounds. For numerical stability, we clip $u$ componentwise to $[-10,10]$ after each Euler integration step. This clipping is applied only during flow sampling, not during MALA refinement; refined endpoints may therefore lie outside this box.

\paragraph{Target density and gradient.}
To perform refinement in $u$-space while targeting $p_\alpha(\cdot\mid s)$ in action space, we use the change-of-variables formula
\begin{equation}
    p_\alpha^u(u\mid s)=p_\alpha(T(u)\mid s)\left|\det J_T(u)\right|,
    \label{eq:target_action_distribution_bound}
\end{equation}
where $J_T(u)$ is the Jacobian of $T$. Substituting \eqref{eq:target_action_distribution} gives
\begin{equation}
    p_\alpha^u(u\mid s)
    \propto
    \exp\left(
        \frac{Q_\phi(s,T(u))}{\alpha}
    \right)
    \prod_{j=1}^{d}
    \Delta a_j
    \left(1-\tanh^2(u_j)\right).
    \label{eq:explicit_bounded_target}
\end{equation}
The gradient of the log target density is
\begin{equation}
\begin{aligned}
    g_\phi^u(s,u)
    &:=
    \nabla_u\log p_\alpha^u(u\mid s) \\
    &=
    \frac{1}{\alpha}
    \left[
        \left.\nabla_a Q_\phi(s,a)\right|_{a=T(u)}
        \odot\Delta a
        \odot\left(1-\tanh^2(u)\right)
    \right]
    -2\tanh(u).
\end{aligned}
    \label{eq:bounded_target_score}
\end{equation}
The first term follows from differentiating the Q-function through $T$, and the second term comes from the log-Jacobian determinant. The normalizing constant and the constant Jacobian factor $\prod_j\Delta a_j$ vanish from the log-density gradient. The action half-range $\Delta a$ remains in the first term through the derivative of $T$.

\paragraph{Refinement and policy learning.}
Starting from $u^{(0)}=u_1$, we perform $K$ MALA steps targeting $p_\alpha^u(\cdot\mid s)$. At iteration $i=0,\ldots,K-1$, the candidate $\widetilde{u}^{(i)}$ is constructed from $u^{(i)}$ using $g_\phi^u(s,u^{(i)})$ with the norm clipping described in Appendix~\ref{app:hyperparameters}. Both forward and reverse proposal densities are evaluated using this clipped gradient, while the target-density terms in the Metropolis--Hastings acceptance ratio use $p_\alpha^u(\cdot\mid s)$ without modification. The candidate is then accepted or rejected to determine $u^{(i+1)}$. Step-size adaptation follows \eqref{eq:eta_adaptation}.

After $K$ MALA steps, we obtain the refined endpoint $u_1'=u^{(K)}$, which is paired with its original source noise $x_0$ and treated as a fixed target. Self-target FM is performed in the same $u$-space: for $t\sim\mathcal{U}[0,1]$, we construct the linear interpolation $u_t=(1-t)x_0+t u_1'$ and regress $v_\theta(s,u_t,t)$ onto $u_1'-x_0$. This is the objective in \eqref{eq:self_target_fm} with $(a_t,a_1')$ replaced by $(u_t,u_1')$. The corresponding refined action is $T(u_1')$.

\section{Critic Learning and Training Procedure}
\label{app:training_details}

Let
\[
    A_\theta(s,x)
    :=
    T\bigl(\Phi_\theta(s,x)\bigr)
\]
denote the bounded action generated by the flow policy from base noise
$x$, where $T$ is the action transformation in
Appendix~\ref{app:bounded_actions}. We use
$\operatorname{sg}[\cdot]$ to denote stop-gradient.

\paragraph{Critic update.}
Let $Q_{\phi_1}$ and $Q_{\phi_2}$ denote the online critics, and let
$Q_{\bar{\phi}_1}$ and $Q_{\bar{\phi}_2}$ denote the corresponding
target critics. For a replay-buffer transition $(s,a,r,s',d)$, we
draw $x_0'\sim\mathcal{N}(0,I_d)$ and define the temporal-difference
target as
\begin{equation}
    y
    =
    r
    +
    \gamma(1-d)
    \min_{i\in\{1,2\}}
    Q_{\bar{\phi}_i}\bigl(s',A_\theta(s',x_0')\bigr).
    \label{eq:critic_target}
\end{equation}
The next action in \eqref{eq:critic_target} is generated by the
online flow policy; RFPO does not maintain a target flow policy. The
target contains neither an entropy term nor additional target-policy
smoothing noise.

The critics are optimized using
\begin{equation}
    \mathcal{L}_{Q}
    =
    \sum_{i=1}^{2}
    \mathbb{E}_{(s,a,r,s',d)\sim\mathcal{D}}
    \left[
        \bigl(Q_{\phi_i}(s,a)-y\bigr)^2
    \right].
    \label{eq:critic_loss}
\end{equation}
The loss is implemented as a mean over the batch and a sum over the
two critics. Critic gradients are clipped to a global norm of $10$.

The replay buffer stores $d=1$ only for environment terminations.
Time-limit truncations reset the environment but retain the bootstrap
term in \eqref{eq:critic_target}.

\paragraph{Critic used for refinement and target update.}
MALA refinement uses the online clipped-double critic
\begin{equation}
    Q_{\phi}^{\min}(s,a)
    :=
    \min\bigl\{
        Q_{\phi_1}(s,a),Q_{\phi_2}(s,a)
    \bigr\}.
    \label{eq:refinement_critic}
\end{equation}
Specifically, in \eqref{eq:explicit_bounded_target}, the Q-function is
given by $Q_{\phi}^{\min}$. The same target density is used to compute
both the proposal score and the Metropolis--Hastings acceptance
probability. Gradients through the minimum in
\eqref{eq:refinement_critic} are computed by automatic
differentiation.

After each critic and flow-policy update, the target critics are
updated by Polyak averaging:
\begin{equation}
    \bar{\phi}_i
    \leftarrow
    (1-\tau)\bar{\phi}_i+\tau\phi_i,
    \qquad
    i\in\{1,2\},
    \qquad
    \tau=0.005.
    \label{eq:target_critic_update}
\end{equation}

\paragraph{Training procedure.}
\begin{algorithm}[t]
\caption{RFPO training procedure}
\label{alg:rfpo}
\begin{algorithmic}[1]
\State Initialize $\theta$, online critics $\phi_1,\phi_2$, target
critics $\bar{\phi}_1,\bar{\phi}_2$, and replay buffer $\mathcal{D}$.
\State Receive initial environment state $s$.
\For{each environment step}
    \State Select $a$ uniformly during warm-up; otherwise sample
    $x_0\sim\mathcal{N}(0,I_d)$ and set $a=A_\theta(s,x_0)$ without gradients.
    \State Execute $a$, observe $(r,s',\mathrm{terminated},\mathrm{truncated})$,
    and store $(s,a,r,s',d)$ in $\mathcal{D}$, where
    $d=\mathbf{1}\{\mathrm{terminated}\}$.

    \If{after warm-up}
        \State Sample $\{(s_b,a_b,r_b,s_b',d_b)\}_{b=1}^{B}$ from $\mathcal{D}$.
        \State Sample $x_{0,b}'\sim\mathcal{N}(0,I_d)$ and compute
        $\displaystyle y_b=r_b+\gamma(1-d_b)
        \min_i Q_{\bar{\phi}_i}(s_b',A_\theta(s_b',x_{0,b}'))$
        without gradients.
        \State Update $\phi_1,\phi_2$ using \eqref{eq:critic_loss}.

        \State Sample $u_{0,b}\sim\mathcal{N}(0,I_d)$ and obtain
        $\displaystyle
        u_{1,b}'\gets
        \operatorname{sg}\!\left[
            \mathrm{MALA}_K\bigl(
                \Phi_\theta(s_b,u_{0,b});
                p_\alpha^u(\cdot\mid s_b)
            \bigr)
        \right]$
        \For{$g=1,\ldots,G$}
            \State Sample $t_b\sim\mathcal{U}[0,1]$ and set
            $u_{t,b}=(1-t_b)u_{0,b}+t_bu_{1,b}'$.
            \State Update $\theta$ using
            $\displaystyle
            \frac{1}{B}\sum_{b=1}^{B}
            \left\|
                v_\theta(s_b,u_{t,b},t_b)-(u_{1,b}'-u_{0,b})
            \right\|_2^2$.
        \EndFor
        \State Update $\bar{\phi}_i\gets(1-\tau)\bar{\phi}_i+\tau\phi_i$
        for $i\in\{1,2\}$.
    \EndIf
    \State Reset if terminated or truncated; otherwise set $s\gets s'$.
\EndFor
\end{algorithmic}
\end{algorithm}

In the reported experiments, warm-up uses 5,000 uniformly sampled actions. Thereafter, each environment step performs one critic update and $G=5$ flow-matching updates. The flow-generated and refined batch is constructed once per training iteration and reused across the $G$ flow-matching updates; only the interpolation time $t$ is resampled. The flow endpoint, MALA refinement, and refined targets are generated. Thus, gradients of the self-target FM objective propagate only through $v_\theta(s,u_t,t)$.

\paragraph{Evaluation protocol.}
Every $10{,}000$ environment steps the current policy is evaluated on five episodes with environment seeds $1000$ through $1004$, and the mean return over those five episodes is recorded; each episode runs until termination or truncation. RFPO is evaluated with actions sampled from the flow policy, which has no deterministic mode: zeroing its source noise does not produce one. Each baseline is evaluated under the convention used in its own paper, which for every baseline considered here is a deterministic action, and those are the numbers reported in Figure~\ref{fig:learning_curves} and Table~\ref{tab:return}. The comparison is in this respect not symmetric, and the asymmetry works against RFPO. Warm-up also differs: RFPO and SAC collect $5{,}000$ uniformly sampled actions before training begins and the diffusion- and flow-policy baselines collect $10{,}000$, so the first evaluation falls at $10{,}000$ environment steps for the former and at $20{,}000$ for the latter.

\paragraph{Smoothing of reported returns.}
Each seed's evaluation series is placed on a common grid at $10{,}000$-step intervals and smoothed with a nine-point centred moving average, a window of $90{,}000$ environment steps. The mean and standard deviation across the five seeds are then taken over the smoothed series rather than over the raw evaluations. The order matters: a five-episode evaluation is noisy enough that averaging first would let a single seed's jitter set the edge of the shaded band. At the two ends of a run the window shrinks to the evaluations that exist rather than padding with zeros, so the curves extend to $1$M steps at both ends. The value reported in Table~\ref{tab:return} is the final point of this smoothed curve.

\section{Hyperparameters}
\label{app:hyperparameters}
\begin{table}[H]
    \centering
    \caption{Hyperparameters used in the two-dimensional experiments of Figure~\ref{fig:toy_examples_compact}. The target geometries are defined in Appendix~\ref{app:shapes}. No critic is learned here: $Q$ is specified in closed form. The target $p_\alpha$ is evaluated on a grid only to compute ground-truth mode weights; RFPO itself accesses the target only through $Q$ and its mode weights are estimated from generated samples.}
    \label{tab:hyperparameters_toy}
    \begin{tabular}{
        @{}p{0.36\textwidth}
        p{0.59\textwidth}@{}
    }
        \toprule
        Hyperparameter & Value \\
        \midrule

        Temperature $\alpha$
        & $0.15$ \\

        Action space
        & $[-1,1]^2$, reached as $a=\tanh(u)$ \\

        \midrule

        MALA steps per cycle $K$
        & $10$ \\

        Initial refinement step size $\eta$
        & $5\times10^{-3}$ \\

        Refinement step-size adaptation
        & Multiplicative, targeting an acceptance rate of $0.6$ \\

        Step-size adaptation gain
        & $0.1$ \\

        Refinement step-size bounds
        & $[10^{-6},1]$ \\

        \midrule

        FM steps per cycle
        & $10$ \\

        ODE integration
        & $30$ Euler steps \\

        Integration box
        & $\lVert u\rVert_\infty\leq 10$ \\

        Flow policy network
        & MLP, $3\times128$, SiLU \\

        Flow policy learning rate
        & $10^{-3}$ (Adam) \\

        \midrule

        Training cycles
        & $3{,}000$ \\

        Samples per cycle
        & $1{,}024$ \\

        Pretraining cycles
        & $150$, flow matching onto $\mathcal{N}(0,1.5^2 I)$ \\

        Seeds
        & $4$ \\

        \midrule

        Evaluation samples
        & $8{,}192$ \\

        Evaluation grid
        & $96\times96$ over $[-1,1]^2$ \\

        \bottomrule
    \end{tabular}
\end{table}

\begin{table}[h]
    \centering
    \caption{Hyperparameters used in the six MuJoCo environments.}
    \label{tab:hyperparameters}
    \begin{tabular}{
        @{}p{0.36\textwidth}
        p{0.59\textwidth}@{}
    }
        \toprule
        Hyperparameter & Value \\
        \midrule
        Number of evaluation episodes & 5 \\
        
        Discount factor $\gamma$
        & $0.999$ (Swimmer); $0.99$ (others) \\

        Temperature $\alpha$
        & $0.1$ (HalfCheetah);
          $0.05$ (Walker2d);
          $0.03$ (Ant);
          $0.01$ (Hopper);
          $0.001$ (Swimmer);
          $0.2$ (HumanoidStandup, annealed) \\

        \midrule

        MALA steps per update $K$
        & $5$ \\

        Initial refinement step size $\eta$
        & $10^{-3}$ \\

        Refinement step-size adaptation
        & Multiplicative, targeting an acceptance rate of $0.6$ \\

        Step-size adaptation gain
        & $0.1$ \\

        Refinement step-size bounds
        & $[10^{-8},1]$ \\

        Score norm clip
        & $10$ \\

        \midrule

        FM steps per update
        & $5$ \\

        ODE integration
        & $20$ Euler steps \\

        Integration box
        & $\lVert u\rVert_\infty\leq 10$ \\

        Flow policy network
        & MLP, $3\times512$, SiLU \\

        Time embedding
        & Random Fourier features, dimension $64$ \\

        Flow policy learning rate
        & $3\times10^{-4}$ (Adam) \\

        \midrule

        Critic network
        & MLP, $3\times512$, SiLU \\

        Critic learning rate
        & $3\times10^{-4}$ (Adam) \\

        Target update rate $\tau$
        & $0.005$ \\

        \midrule

        Warm-up steps
        & $5{,}000$, with uniformly sampled actions \\

        Training iterations per environment step
        & $1$ \\

        Replay buffer size
        & $10^6$ \\

        Batch size
        & $256$ \\

        Gradient norm clip
        & $10$ (actor and critic) \\

        \bottomrule
    \end{tabular}
\end{table}

\paragraph{Temperature schedule.}
The temperature is held constant at the value listed in Table~\ref{tab:hyperparameters} for all environments except HumanoidStandup. For HumanoidStandup, a higher initial temperature is used to encourage broader exploration in its high-dimensional action space before gradually increasing the preference for high-value actions. The temperature at environment step $n$ is $\alpha_n=\alpha_{\mathrm{final}}\, 10^{1-\min(1,\,5n/T)}$, where $\alpha_{\mathrm{final}}=0.2$ and $T$ is the total number of environment steps. The temperature therefore starts at $2.0$, reaches $0.2$ after the first $20\%$ of training, and remains constant thereafter.

\paragraph{Refinement step-size adaptation.}
We adapt the refinement step size online to balance the magnitude of sample updates and the acceptance rate. Small step sizes produce limited refinement, whereas excessively large step sizes can lead to frequent rejections. The value listed in Table~\ref{tab:hyperparameters} is the initial step size. After each refinement round of $K$ MALA steps, we update it as
\begin{equation}
    \eta \leftarrow
    \operatorname{clip}\left(
        \eta\exp\left[
            \lambda\left(\hat{p}_{\mathrm{acc}}-p^\star\right)
        \right],
        \eta_{\min},\eta_{\max}
    \right),
    \label{eq:eta_adaptation}
\end{equation}
where $\hat{p}_{\mathrm{acc}}$ is the acceptance-rate estimate from the refinement round, $p^\star=0.6$ is the target acceptance rate, and $\lambda=0.1$ is the adaptation gain. This rule increases the step size when the estimated acceptance rate exceeds $p^\star$ and decreases it otherwise.

The target acceptance rate is chosen as a heuristic, motivated by the asymptotically optimal rate of $0.574$ established for certain high-dimensional target distributions \citep{roberts1998optimal}. The multiplicative update adjusts $\eta$ relative to its current scale, while the adaptation gain limits abrupt changes. The bounds $[\eta_{\min},\eta_{\max}]=[10^{-8},1]$ prevent excessively small or large step sizes. This adaptation is a practical implementation choice; the theoretical analysis considers a fixed refinement kernel.

\paragraph{Score clipping.}
We clip the log-density gradient $\nabla_u\log p_\alpha^u(u\mid s)$ to a Euclidean norm of at most $10$ before using it in the proposal. This limits the drift magnitude in regions with large gradients. Both the forward and reverse proposal densities in the Metropolis--Hastings acceptance ratio are evaluated using the clipped drift, while the target-density ratio uses the original target $p_\alpha^u(\cdot\mid s)$. For a fixed target and step size, the resulting transition therefore satisfies detailed balance with respect to $p_\alpha^u(\cdot\mid s)$: clipping modifies the proposal but preserves the target as an invariant distribution.

\section{Target Geometries}
\label{app:shapes}

All target distributions are defined over $a\in[-1,1]^2$ as
\begin{equation}
    p_\alpha(a)
    \propto
    \exp\left(\frac{Q(a)}{\alpha}\right).
    \label{eq:toy_target_distribution}
\end{equation}
The six landscapes use nominal component amplitudes or height profiles specified using common reference levels $h_{\min}=0.55$ and $h_{\max}=1.00$, with the constructions given below. For any two points $a$ and $b$, their density ratio is $p_\alpha(a)/p_\alpha(b)=\exp((Q(a)-Q(b))/\alpha)$. These common reference levels provide a scale for comparing temperatures across geometries. They do not imply identical component masses, which also depend on component widths, lengths, and overlap. The mode weights reported in the figures are the probability masses of the component regions defined by the assignments below.

\paragraph{Blob targets.}
The \texttt{iso4}, \texttt{aniso4}, and \texttt{grid9} landscapes are constructed as sums of Gaussian-shaped bumps:
\begin{equation}
    Q(a)
    =
    \sum_{i=1}^{M}
    h_i
    \exp\left(
        -\frac{1}{2}
        (a-c_i)^\top\Sigma_i^{-1}(a-c_i)
    \right),
    \label{eq:blob_target}
\end{equation}
where $c_i$, $h_i$, and $\Sigma_i$ specify the center, amplitude, and covariance of bump $i$, respectively. Because the bumps overlap, $h_i$ need not equal the value of $Q$ at $c_i$. For component-wise evaluation, each action is assigned to its nearest center:
\begin{equation}
    i^\star(a)
    =
    \arg\min_i \lVert a-c_i\rVert_2.
    \label{eq:blob_component_assignment}
\end{equation}

\medskip
\noindent\textbf{\texttt{iso4}.}
This landscape contains four isotropic bumps with centers
\begin{equation}
\begin{aligned}
    c_1&=(0.55,0.55), &
    c_2&=(-0.55,0.55), \\
    c_3&=(-0.55,-0.55), &
    c_4&=(0.55,-0.55).
\end{aligned}
\end{equation}
Their amplitudes are $(h_1,h_2,h_3,h_4)=(1.00,0.85,0.70,0.55)$, and all covariances are $\Sigma_i=0.25^2I_2$.

\medskip
\noindent\textbf{\texttt{grid9}.}
This landscape contains nine isotropic bumps with centers $\{c_i\}_{i=1}^{9}=\{-0.62,0,0.62\}^{2}$. The centers are indexed by row from top to bottom and from left to right within each row. The amplitudes are $h_i=1.00-0.45(i-1)/8$ for $i=1,\ldots,9$, and all covariances are $\Sigma_i=0.155^2I_2$.

\medskip
\noindent\textbf{\texttt{aniso4}.}
This landscape uses the same centers and amplitudes as \texttt{iso4}, with anisotropic covariances
\begin{equation}
    \Sigma_i
    =
    R_{\varphi_i}
    \operatorname{diag}(0.28^2,0.10^2)
    R_{\varphi_i}^{\top},
\end{equation}
where $R_{\varphi_i}$ is the two-dimensional rotation matrix with angle $\varphi_i$, and $(\varphi_1,\varphi_2,\varphi_3,\varphi_4)=(0,\pi/4,\pi/2,3\pi/4)$.

\paragraph{Curve targets.}
The \texttt{ring4}, \texttt{spiral4}, and \texttt{arc4} landscapes form ridges around curves represented by points $\{p_j\}_{j=1}^{N}$ with associated heights $\{h_j\}_{j=1}^{N}$. A direct sum of Gaussian bumps would increase in magnitude as more points are added along the curve. We instead use normalized distance-based weights:
\begin{equation}
    w_j(a)
    =
    \frac{
        \exp\left(-\lVert a-p_j\rVert_2^2/(2\tau^2)\right)
    }{
        \sum_{\ell=1}^{N}
        \exp\left(-\lVert a-p_\ell\rVert_2^2/(2\tau^2)\right)
    },
    \label{eq:curve_assignment_weights}
\end{equation}
where $\tau=0.035$ controls the locality of the weighting. The locally averaged height and squared distance are
\begin{equation}
    \bar h(a)=\sum_{j=1}^{N}w_j(a)h_j,
    \qquad
    \bar d^2(a)=
    \sum_{j=1}^{N}w_j(a)\lVert a-p_j\rVert_2^2.
    \label{eq:curve_local_statistics}
\end{equation}
We construct the landscape as
\begin{equation}
    \widetilde Q(a)
    =
    \bar h(a)
    \exp\left(-\frac{\bar d^2(a)}{2\sigma^2}\right),
    \label{eq:unnormalized_curve_target}
\end{equation}
where $\sigma$ controls the ridge width. Since neighboring points receive nonzero weights, $\bar d^2(p_j)$ can be positive even at a sampled curve point. We therefore rescale the landscape as
\begin{equation}
    Q(a)=c_Q\widetilde Q(a),
    \qquad
    c_Q=
    \frac{h_{\max}}{\max_j\widetilde Q(p_j)}.
    \label{eq:curve_target_rescaling}
\end{equation}
This sets the maximum evaluated over the sampled curve points to $h_{\max}=1.00$; it does not enforce the global maximum over the entire action domain.

For component-wise evaluation, an action is assigned to the segment of its nearest sampled curve point:
\begin{equation}
    j^\star(a)=\arg\min_j\lVert a-p_j\rVert_2,
    \qquad
    i^\star(a)=\operatorname{seg}(j^\star(a)).
    \label{eq:curve_nearest_point}
\end{equation}
The segment definitions are given below. Using the sampled curve points avoids assigning spiral points solely by their distances to a small set of representative anchors.

\medskip
\noindent\textbf{\texttt{ring4}.}
The ring is represented by $N=240$ uniformly spaced angles $\theta\in[0,2\pi)$, with $p(\theta)=0.62(\cos\theta,\sin\theta)$. Its height profile is
\begin{equation}
    h(\theta)
    =
    h_0+
    \sum_{i=0}^{3}
    (h_i^\star-h_0)
    \exp\left(
        \lambda[\cos(\theta-\theta_i)-1]
    \right),
    \label{eq:ring_height}
\end{equation}
where $h_0=0.40$, $\lambda=6$, $\theta_i=\pi/4+i\pi/2$, and $(h_0^\star,h_1^\star,h_2^\star,h_3^\star)=(1.00,0.85,0.70,0.55)$. These are nominal peak levels: contributions from neighboring bumps mean that $h(\theta_i)$ need not equal $h_i^\star$ exactly. The four components are angular quarters centered at $\theta_i$, and $\sigma=0.11$.

\medskip
\noindent\textbf{\texttt{spiral4}.}
The spiral is represented by $N=320$ points uniformly spaced in $t\in[0,1]$:
\begin{equation}
\begin{aligned}
    \theta(t)&=4\pi t, \\
    r(t)&=0.16+0.68t, \\
    p(t)&=r(t)(\cos\theta(t),\sin\theta(t)).
\end{aligned}
\end{equation}
The height decreases along the spiral as $h(t)=1.00-0.45t$. The four components correspond to four equal intervals of $t$, and $\sigma=0.09$.

\medskip
\noindent\textbf{\texttt{arc4}.}
This landscape contains four arcs indexed by $i=0,\ldots,3$. Each arc is represented by 60 points uniformly spaced in $\theta$ on $p(\theta)=0.62(\cos\theta,\sin\theta)$ over
\begin{equation}
    \theta\in[\beta_i-0.62,\beta_i+0.62],
    \qquad
    \beta_i=\frac{\pi}{4}+i\frac{\pi}{2}.
\end{equation}
The height is constant within each arc: $h_i=1.00-0.45i/3$. Each arc defines one component, and $\sigma=0.105$.

\paragraph{Design considerations.}
The baseline height $h_0=0.40$ in \texttt{ring4} maintains a continuous ridge between the four higher-value regions. This provides a connected ridge geometry in contrast to the separated high-value regions of \texttt{iso4}.

The curve width parameters are chosen to preserve visible geometric features, such as the central low-density region of the ring. For a local ridge profile $Q(a)=h\exp(-d^2/(2\sigma^2))$, where $d$ is the distance from the ridge, a Taylor expansion near $d=0$ gives
\begin{equation}
    \frac{p_\alpha(a)}{p_\alpha(p)}
    =
    \exp\left[
        \frac{h}{\alpha}
        \left(
            e^{-d^2/(2\sigma^2)}-1
        \right)
    \right]
    \approx
    \exp\left(
        -\frac{h d^2}{2\alpha\sigma^2}
    \right),
    \label{eq:local_ridge_density}
\end{equation}
where $p$ is a point on the ridge with $Q(p)=h$. The local transverse scale is therefore $\sigma\sqrt{\alpha/h}$, showing how the ridge width, height, and temperature jointly determine the concentration around the curve.

\newpage

\section{Proofs}
\label{app:proof}

In this section, we provide the definitions used in the proofs and present the proofs of the results in Section~\ref{sec:theory}.

\subsection{Definitions}
\label{app:defs}

For a random variable $X$, let $\mathrm{Law}(X)$ denote its distribution, and let
$\mathcal{B}(\mathcal{A})$ denote the Borel $\sigma$-algebra on $\mathcal{A}$.

A Markov kernel on $\mathcal{A}$ is a map
$P : \mathcal{A} \times \mathcal{B}(\mathcal{A}) \to [0,1]$
such that $P(a,\cdot)$ is a probability measure for every $a\in\mathcal{A}$,
and $a\mapsto P(a,B)$ is measurable for every $B\in\mathcal{B}(\mathcal{A})$.
For a probability measure $\pi$ on $\mathcal{A}$, let us define
\[
  (\pi P)(B)
  :=
  \int \pi(da)\,P(a,B).
\]
Let $\pi P^K$ denote the measure obtained by applying $P$ $K$ times, and let us call $\pi$
invariant for $P$ if $\pi P = \pi$.

For a fixed state $s$, let $M_s$ denote the kernel given by~\eqref{eq:q_guided_proposal} and~\eqref{eq:q_guided_acceptance} together with the
accept--reject rule, so that $M_s(a,\cdot)$ is the conditional distribution of $a^{(k+1)}$
given $a^{(k)}=a$. Thus, if $a^{(0)}\sim\pi$, then $a^{(K)}\sim\pi M_s^K$.
The Metropolis--Hastings correction ensures that $p_\alpha(\cdot\mid s)$ is invariant
for $M_s$ \citep{roberts1996exponential}.

\subsection{Assumptions}
\label{app:assumptions}

\begin{assumption}[Regularity, moments, and initialization]
\label{ass:reg}
For $\rho$-almost every state $s$, where $\rho$ is the state
distribution used in the self-target FM objective, the following
conditions hold throughout the cycles under analysis:
\begin{enumerate}
    \item[(i)] The normalizing constant $Z_\alpha(s)$ in
    \eqref{eq:target_action_distribution} satisfies
    $0<Z_\alpha(s)<\infty$.

    \item[(ii)] For every $n$,
    $\mathbb{E}_{a\sim\pi_n(\cdot\mid s)}
    [|Q_\phi(s,a)|]<\infty$.

    \item[(iii)] In every cycle, the refined action satisfies
    $\mathbb{E}[\|a_1'\|^2\mid s]<\infty$.

    \item[(iv)] In every cycle, $v^\star(s,x,t)$ is jointly
    continuous in $x$ and $t$ for $t\in[0,1)$. For every
    $0<T<1$, there is a constant $L_T$ such that
    $v^\star(s,\cdot,t)$ is $L_T$-Lipschitz for all
    $t\in[0,T]$.

    \item[(v)] The initial policy satisfies
    $\mathrm{KL}\bigl(
        \pi_0(\cdot\mid s)\,\|\,p_\alpha(\cdot\mid s)
    \bigr)<\infty$.
    \item[(vi)] $Q_\phi(s,\cdot)$ and $\nabla_a Q_\phi(s,\cdot)$ are finite on the state-wise refinement domain.
\end{enumerate}
\end{assumption}

\subsection{Integrability of the Velocity Field}
\label{app:integrability}

\begin{lemma}[Integrability of the velocity field]
\label{lem:vstar-bounds}
Fix a state $s$ and let Assumption~\ref{ass:reg} hold. Then for each $t < 1$ the velocity field
$v^\star(s,\cdot,t)$ is defined $p_t(\cdot \mid s)$-almost everywhere and satisfies
\[
  \int \bigl\| v^\star(s,x,t) \bigr\| \, p_t(dx \mid s) \leq \mathbb{E} \bigl\| a_1' - x_0 \bigr\| ,
  \qquad
  \int \bigl\| v^\star(s,x,t) \bigr\|^2 \, p_t(dx \mid s) \leq \mathbb{E} \bigl\| a_1' - x_0 \bigr\|^2 .
\]
Consequently
\[
  \int_0^1 \!\! \int \bigl\| v^\star(s,x,t) \bigr\| \, p_t(dx \mid s) \, dt < \infty ,
  \qquad
  \int_0^1 \!\! \int \bigl\| v^\star(s,x,t) \bigr\|^2 \, p_t(dx \mid s) \, dt < \infty .
\]
\end{lemma}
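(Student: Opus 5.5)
The argument is a conditional Jensen inequality, preceded by a check that the conditional expectation defining $v^\star$ exists. Fix $s$ and $t<1$, and work on a probability space carrying the pair $(x_0,a_1')$. Here $x_0\sim\mathcal{N}(0,I_d)$, and $a_1'$ is obtained from $x_0$ through the flow map and $K$ MALA transitions. The pair has some joint law that need not be a product. Set $Y:=a_1'-x_0$ and $X_t:=(1-t)x_0+t a_1'$, so that $p_t(\cdot\mid s)=\mathrm{Law}(X_t)$.

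\textbf{Integrability of $Y$.} By Assumption~\ref{ass:reg}(iii), $\mathbb{E}\|a_1'\|^2<\infty$, and $\mathbb{E}\|x_0\|^2=d$. Hence
\[
\mathbb{E}\|Y\|^2\le 2\,\mathbb{E}\|a_1'\|^2+2d<\infty,
\qquad
\mathbb{E}\|Y\|\le\bigl(\mathbb{E}\|Y\|^2\bigr)^{1/2}<\infty .
\]
Since $Y$ is integrable, the conditional expectation $\mathbb{E}[Y\mid X_t]$ exists and is a.s.\ unique. By the Doob--Dynkin factorization it can be written as $g_t(X_t)$ for a Borel map $g_t$, and $g_t$ is unique up to $p_t$-null sets. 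This is the precise meaning of $v^\star(s,\cdot,t):=g_t$ being defined $p_t(\cdot\mid s)$-almost everywhere. Assumption~\ref{ass:reg}(iv) then selects the continuous representative.

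\textbf{Pointwise-in-$t$ bounds.} Apply the conditional Jensen inequality to the convex functions $y\mapsto\|y\|$ and $y\mapsto\|y\|^2$:
\[
\|v^\star(s,X_t,t)\|\le\mathbb{E}\bigl[\|Y\|\mid X_t\bigr],
\qquad
\|v^\star(s,X_t,t)\|^2\le\mathbb{E}\bigl[\|Y\|^2\mid X_t\bigr]
\quad\text{a.s.}
\]
Taking expectations and using the tower property gives
\[
\int\|v^\star(s,x,t)\|\,p_t(dx\mid s)\le\mathbb{E}\|Y\|,
\qquad
\int\|v^\star(s,x,t)\|^2\,p_t(dx\mid s)\le\mathbb{E}\|Y\|^2 ,
\]
which are exactly the two displayed inequalities. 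The right-hand sides are finite and do not depend on $t$.

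\textbf{Integration over $t$.} Both bounds hold uniformly for $t\in[0,1)$. The time integrals over $[0,1]$ are therefore at most $\mathbb{E}\|Y\|$ and $\mathbb{E}\|Y\|^2$, since the single point $t=1$ has Lebesgue measure zero.

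\textbf{Main obstacle: joint measurability in $(x,t)$.} Tonelli's theorem needs the integrand to be jointly measurable in $(x,t)$, which Assumption~\ref{ass:reg}(iv) supplies. Alternatively, one can take $t$ itself to be random with $t\sim\mathcal{U}[0,1]$ independent of $(x_0,a_1')$, and define $v^\star(s,x,t)=\mathbb{E}[Y\mid X_t=x,\,t]$ as a single conditional expectation on the enlarged space. The same Jensen–tower argument then yields
\[
\int_0^1\!\!\int\|v^\star(s,x,t)\|^2\,p_t(dx\mid s)\,dt\le\mathbb{E}\|Y\|^2<\infty ,
\]
and likewise for the first moment, directly and with joint measurability built in.
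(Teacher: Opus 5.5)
Your proposal is correct and follows the paper's proof essentially step for step: $v^\star(s,\cdot,t)$ is a conditional expectation defined only up to $p_t$-null sets, conditional Jensen for $\|\cdot\|$ and $\|\cdot\|^2$ plus the tower property gives both bounds, the right-hand sides are finite by Assumption~\ref{ass:reg}(iii) and the Gaussianity of $x_0$, and their independence of $t$ gives the time integrals. Your added discussion of joint measurability in $(x,t)$, including the alternative with $t\sim\mathcal{U}[0,1]$ independent, fills in a detail the paper leaves implicit but does not change the argument.
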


\begin{proof}
Fix $s$ and omit it from the notation, and let $t < 1$. The field $v^\star(\cdot,t)$ is a conditional
expectation of $a_1' - x_0$ given $a_t$, so it is determined up to a $p_t$-null set, and we fix one such
version throughout. Conditional Jensen's inequality applied to the norm gives
\[
  \int \bigl\| v^\star(x,t) \bigr\| \, p_t(dx)
  = \mathbb{E} \bigl\| \mathbb{E}\bigl[\, a_1' - x_0 \,\big|\, a_t \,\bigr] \bigr\|
  \leq \mathbb{E} \bigl\| a_1' - x_0 \bigr\| ,
\]
and the same argument applied to the squared norm gives the second bound. Both right-hand sides are finite because $x_0$ is Gaussian and $\mathbb{E}\|a_1'\|^2 < \infty$ by Assumption~\ref{ass:reg}. Neither bound depends on $t$, so integrating over $t \in [0,1]$ preserves finiteness.
\end{proof}

\subsection{Proof of Lemma~\ref{lem:coupling}}
\label{proof:lemma1}

\begin{proof}
 Since $\mathcal{L}_{\mathrm{SFM}}$ is the average
over $s \sim \rho$ of the corresponding state-wise objective, it suffices to minimize the
latter for each $s$. Fix $s$ and omit it from the notation. Let us define $Y := a_1' - x_0$ and let $\Phi_t^{v^\star}$ denote the flow map induced by $v^\star$
from time 0 to time $t$.

By Assumption~\ref{ass:reg} and the Gaussianity of $x_0$ we have $\mathbb{E}\|Y\|^2 < \infty$, and
Lemma~\ref{lem:vstar-bounds} gives $\int_0^1\!\int \|v^\star\|^2 \, dp_t \, dt \leq \mathbb{E}\|Y\|^2$.
These moment conditions are all that the marginal-preservation results for conditional flow
matching under a general coupling of the source and the target require
\citep[Lemma~3.1]{pooladian2023multisample}, \citep[Theorems~3.1 and 3.2]{tong2024improving}. The pair $(x_0, a_1')$ is such a coupling of
$\mathcal{N}(0, I_d)$ and $\mu_n$, with $a_1'$ obtained from $x_0$ through
\eqref{eq:flow_endpoint}, \eqref{eq:q_guided_proposal}, and \eqref{eq:q_guided_acceptance}. Applied to
this coupling, those results give two facts. First, for any $v$ with
$\int_0^1\!\int \|v\|^2 \, dp_t \, dt < \infty$,
\[
  \mathbb{E}\bigl\| v(a_t,t) - Y \bigr\|^2
  = \int_0^1 \!\int \bigl\| v - v^\star \bigr\|^2 \, dp_t \, dt
  + \mathbb{E}\bigl\| v^\star(a_t,t) - Y \bigr\|^2 ,
\]
since the cross term vanishes by the tower property of conditional expectation. The second
term does not depend on $v$, so $v^\star$ minimizes the objective and every minimizer agrees
with $v^\star$ for $dt \otimes p_t$-almost every $(t, x)$. Second, $(p_t)_{t \in [0,1)}$ is a
weak solution of
\[
  \partial_t p_t + \nabla \cdot \bigl( p_t v^\star \bigr) = 0, \qquad p_0 = \mathcal{N}(0, I_d).
\]

The remainder of the proof is specific to our setting. Lemma~\ref{lem:chain} requires
the flow map generated by $v^\star$ to exist and to reach $t = 1$, and Assumption~\ref{ass:reg}
is what makes this possible.

Fix $\tau < 1$. The family $(p_t)_{t \leq \tau}$ is narrowly continuous, since $a_t$ is continuous in $t$ and
dominated convergence applies to $\int f \, dp_t$ for every bounded continuous $f$. Lemma~\ref{lem:vstar-bounds}
gives $\int_0^\tau \!\int \|v^\star\| \, dp_t \, dt < \infty$.
Moreover, for every compact set $B \subset \mathbb{R}^d$,
joint continuity of $v^\star$ implies
$\sup_{(x,t)\in B\times[0,\tau]}\|v^\star(x,t)\| < \infty$,
while Assumption~\ref{ass:reg} gives
$\operatorname{Lip}(v^\star(\cdot,t);B) \leq L_\tau$ for all
$t\in[0,\tau]$. Hence,
\[
  \int_0^\tau \left(
    \sup_{x\in B}\|v^\star(x,t)\|
    + \operatorname{Lip}(v^\star(\cdot,t);B)
  \right) dt
  \;<\; \infty .
\]

The field therefore satisfies the
regularity and integrability conditions required by the representation theorem for the continuity equation
\citep[Proposition 8.1.8]{ambrosio2005gradient}. That theorem applies to
$(p_t)_{t \leq \tau}$ and shows that the characteristic equation
$\dot z(t) = v^\star(z(t),t)$ with $z(0) = x_0$ admits a solution $z(t) = \Phi^{v^\star}_t(x_0)$ on all of $[0,\tau]$ for $\mathcal{N}(0,I_d)$-almost every $x_0$, and that $p_t$ is the law of that solution. Since
$\tau < 1$ is arbitrary,
\[
  \mathrm{Law}\bigl( \Phi_t^{v^\star}(x_0) \bigr) = p_t, \qquad t < 1 .
\]

The identity above and Lemma~\ref{lem:vstar-bounds} give
\[
  \mathbb{E} \int_0^1 \|\dot z(t)\| \, dt
  = \int_0^1 \int \|v^\star(x,t)\| \, p_t(dx) \, dt < \infty ,
\]

so $z$ admits an absolutely continuous extension to $[0,1]$ with probability one and
$\Phi^{v^\star}(x_0) = \lim_{t \to 1} z(t)$ exists with probability one. Therefore
\[
  \mathrm{Law}\bigl( \Phi^{v^\star}(x_0) \bigr)
  = \lim_{t \to 1} \mathrm{Law}\bigl( \Phi_t^{v^\star}(x_0) \bigr)
  = \lim_{t \to 1} p_t
  = \mu_n ,
\]
where the limits are weak, the first holds because almost sure convergence implies
convergence in distribution, the second is the identity above, and the third holds because
$a_t \to a_1'$ with probability one.
\end{proof}

\subsection{Proof of Lemma~\ref{lem:chain}}
\label{proof:lemma2}

\begin{proof}
Fix $s$ and omit it from the notation. By~\eqref{eq:flow_endpoint},
$a_1 = \Phi_{\theta_n}(x_0)$ with $x_0 \sim \mathcal{N}(0, I_d)$, so $a_1 \sim \pi_n$.
Since $a^{(0)} = a_1$ and the refined action is $a_1' = a^{(K)}$, Appendix~\ref{app:defs}
gives
\[
  \mu_n = \pi_n M_s^{K}.
\]

Lemma~\ref{lem:coupling} determines the minimizer only up to a $dt \otimes p_t$-negligible
set. This suffices here. Every nonempty open set receives positive mass under $p_t$ for each $t < 1$. To see
this, let $\kappa_s(a, da') = q_\phi(a' \mid a)\, P_{\mathrm{acc}}(a, a')\, da'$ denote the sub-kernel
corresponding to an accepted move, so that $M_s \geq \kappa_s$ and hence $M_s^K \geq \kappa_s^K$. The density
of $\kappa_s$ is positive on all of $\mathbb{R}^d \times \mathbb{R}^d$, since the proposal is Gaussian and
$Q_\phi(\cdot)$ and $\nabla_a Q_\phi(\cdot)$ are finite, and the $K$-fold density of $\kappa_s^K$ is
therefore positive as well. Conditioning on $x_0 = x$ fixes $a^{(0)} = \Phi_{\theta_n}(x)$, so the law of
$a_1'$ given $x_0 = x$ dominates a measure with everywhere positive density. For $t \in (0,1)$ the interpolated action
is an affine bijection of $a_1'$ at fixed $x$, so the same holds for the law of $a_t$ given $x_0 = x$, and
integrating over $x_0$ preserves the property, while the case $t = 0$ is immediate from
$p_0 = \mathcal{N}(0, I_d)$. This argument rests on the structure of the refinement kernel rather than on any
independence between $x_0$ and $a_1'$. Both $v_{\theta_{n+1}}$ and $v^\star$ are continuous on
$\mathbb{R}^d \times [0,1)$, the former because it is a neural network and the latter by
Assumption~\ref{ass:reg}. Two continuous fields that agree $dt \otimes p_t$-almost everywhere
therefore agree everywhere on $\mathbb{R}^d \times [0,1)$, so the exact inner solution gives
$v_{\theta_{n+1}} = v^\star$. Since the flow is integrated exactly,
$\Phi_{\theta_{n+1}}(x_0) = \Phi^{v^\star}(x_0)$. The learned field therefore inherits the regularity
granted to $v^\star$ by Assumption~\ref{ass:reg}, so no separate condition on it is required.

Therefore
\[
  \pi_{n+1}
  = \mathrm{Law}\bigl( \Phi^{v^\star}(x_0) \bigr)
  = \mu_n
  = \pi_n M_s^{K}.
\]
The second equality holds due to Lemma~\ref{lem:coupling}.
Iterating this identity from $\pi_0$ yields $\pi_n = \pi_0 M_s^{nK}$.
\end{proof}

\subsection{Proof of Theorem~\ref{thm:improve}}
\label{proof:monotone_improvement}

\begin{proof}
Fix $s$ and omit it from the notation. Let us first relate $\mathcal{J}$ to a divergence
from the target. For any $\pi$ with a density,
\[
  \begin{aligned}
    \mathrm{KL}\bigl( \pi \,\|\, p_\alpha \bigr)
    &= \int \pi(a) \log \frac{\pi(a)}{p_\alpha(a)} \, da \\
    &= -\mathcal{H}(\pi)
       - \frac{1}{\alpha} \mathbb{E}_{a \sim \pi}\bigl[ Q_\phi(a) \bigr]
       + \log Z_\alpha \\
    &= \log Z_\alpha - \frac{1}{\alpha} \mathcal{J}(\pi),
  \end{aligned}
\]
where the second equality substitutes the definition of $p_\alpha$ and the third collects
terms. The constant $Z_\alpha$ is finite and positive by Assumption~\ref{ass:reg}, so the identity is
meaningful for every $\pi$ that admits a density.

Assumption~\ref{ass:reg} makes $\mathrm{KL}(\pi_0 \,\|\, p_\alpha)$ finite, so $\pi_0 \ll p_\alpha$ and
$\pi_0$ admits a density. The data-processing inequality below propagates finiteness to every $n$, and
Lemma~\ref{lem:chain} then gives $\pi_n \ll p_\alpha$ for every $n$, so each $\pi_n$ admits a density and the
identity above applies to it. Rearranging that identity expresses $-\mathcal{H}(\pi_n)$ as a sum of finite
terms, using $\mathbb{E}_{a \sim \pi_n}[\,|Q_\phi(a)|\,] < \infty$ from Assumption~\ref{ass:reg}, so
$\mathcal{H}(\pi_n)$ and $\mathcal{J}(\pi_n)$ are finite for every $n$.

By Appendix~\ref{app:defs}, $p_\alpha M_s = p_\alpha$, and hence $p_\alpha M_s^K = p_\alpha$.
Applying the data-processing inequality for the KL divergence to the kernel
$M_s^K$ gives
\[
  \mathrm{KL}\bigl( \pi_n M_s^K \,\|\, p_\alpha M_s^K \bigr)
  \;\leq\; \mathrm{KL}\bigl( \pi_n \,\|\, p_\alpha \bigr),
\]
and Lemma~\ref{lem:chain} identifies the left-hand side with
$\mathrm{KL}( \pi_{n+1} \,\|\, p_\alpha )$. Combining this with the identity above,
\[
  \log Z_\alpha - \frac{1}{\alpha} \mathcal{J}(\pi_{n+1})
  \;\leq\;
  \log Z_\alpha - \frac{1}{\alpha} \mathcal{J}(\pi_n),
\]
which rearranges to $\mathcal{J}(\pi_{n+1}) \geq \mathcal{J}(\pi_n)$ since $\alpha > 0$.
Neither step constrains $\eta$ or $K$. If $\pi_n = \pi_n M_s^K$, then
$\pi_{n+1} = \pi_n$ by Lemma~\ref{lem:chain} and the two objectives coincide.

Finally, the identity above shows that maximizing $\mathcal{J}$ is equivalent to minimizing
$\mathrm{KL}( \pi \,\|\, p_\alpha )$, which is nonnegative and vanishes only at
$\pi = p_\alpha$. Taking $\pi_n = p_\alpha$ gives $\pi_{n+1} = p_\alpha M_s^K = p_\alpha$,
so $p_\alpha$ is a fixed point of the outer cycle.
\end{proof}

\subsection{Bounded Actions}
\label{app:bounded_proof}

The bounded-action implementation in Appendix~\ref{app:bounded_actions}
operates in the unconstrained coordinate $u\in\mathbb{R}^d$ and targets
the induced density $p_\alpha^u(\cdot\mid s)$ in
\eqref{eq:explicit_bounded_target}. The following extension concerns
the idealized latent-coordinate formulation: the refinement kernel is
held fixed and the flow ODE is integrated exactly. Thus, it does not
by itself cover Euler discretization or the latent clipping used during
practical flow sampling.

For a fixed target and step size, the score-clipped proposal described
in Appendix~\ref{app:hyperparameters} still leaves
$p_\alpha^u(\cdot\mid s)$ invariant, because the same clipped drift is
used in both the forward and reverse proposal densities. It can
therefore be viewed as an alternative fixed refinement kernel in the
analysis. In contrast, the online step-size adaptation changes the
kernel across updates and is outside the fixed-kernel setting of
Theorem~\ref{thm:improve}.

The normalizing constant of $p_\alpha^u(\cdot\mid s)$ is finite when
$Q_\phi(s,\cdot)$ is bounded on the bounded action space. Indeed,
\begin{equation}
    \left|\det J_T(u)\right|
    =
    \prod_{j=1}^{d}
    \Delta a_j
    \left(1-\tanh^2(u_j)\right)
\end{equation}
decays exponentially in the tails and is integrable on
$\mathbb{R}^d$. In contrast, its logarithm decreases linearly in the
tails; it is the Jacobian determinant, rather than the log-Jacobian,
that decays exponentially.

To relate the latent and action-coordinate objectives, let
$\pi^u(\cdot\mid s)$ be a distribution on $\mathbb{R}^d$ with a
density, and let $\pi^a(\cdot\mid s)=T_{\#}\pi^u(\cdot\mid s)$ denote
its pushforward through $T$. Define the latent-coordinate objective as
\begin{equation}
\begin{aligned}
    \mathcal{J}^u(\pi^u\mid s)
    &:=
    \mathbb{E}_{u\sim\pi^u(\cdot\mid s)}
    \left[
        Q_\phi(s,T(u))
        +
        \alpha\log\left|\det J_T(u)\right|
    \right] \\
    &\quad+
    \alpha\mathcal{H}(\pi^u(\cdot\mid s)).
\end{aligned}
\label{eq:latent_policy_objective}
\end{equation}

\begin{proposition}[Coordinate equivalence]
\label{prop:bounded_coordinate_equivalence}
For every fixed state $s$ and every $\pi^u$ satisfying the stated
integrability conditions,
\begin{equation}
    \mathrm{KL}\left(
        \pi^u(\cdot\mid s)\,\middle\|\,p_\alpha^u(\cdot\mid s)
    \right)
    =
    \mathrm{KL}\left(
        \pi^a(\cdot\mid s)\,\middle\|\,p_\alpha(\cdot\mid s)
    \right),
\end{equation}
and
\begin{equation}
    \mathcal{J}^u(\pi^u\mid s)
    =
    \mathcal{J}(\pi^a\mid s).
\end{equation}
\end{proposition}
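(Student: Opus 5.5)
The plan is to prove the KL identity by a change of variables under the diffeomorphism $T:\mathbb{R}^d\to\prod_j(a_{\min,j},a_{\max,j})$. Then we obtain the objective identity by expanding both KL divergences in the form already used in the proof of Theorem~\ref{thm:improve}.

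First, the pushforward has an explicit density. Since $T$ is a smooth bijection onto the open box with $|\det J_T(u)|>0$, the pushforward $\pi^a=T_\#\pi^u$ has density
\[
\pi^a(a)=\pi^u(T^{-1}(a))\,\bigl|\det J_T(T^{-1}(a))\bigr|^{-1}.
\]
By \eqref{eq:target_action_distribution_bound}, the target $p_\alpha^u$ is the density of the pullback of $p_\alpha$, restricted to the box. This requires $p_\alpha$ to be normalized over the bounded action space, which matches the action domain in the bounded setting. Hence the Jacobian factors cancel in the likelihood ratio:
\[
\frac{\pi^a(T(u))}{p_\alpha(T(u))}=\frac{\pi^u(u)}{p_\alpha^u(u)}.
\]
Writing
\[
\mathrm{KL}(\pi^a\|p_\alpha)=\int\log\frac{\pi^a}{p_\alpha}\,d\pi^a
\]
and substituting $a=T(u)$ via the pushforward identity $\int f\,d(T_\#\pi^u)=\int f\circ T\,d\pi^u$ gives the first equality. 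Absolute continuity transfers in both directions because $T$ is a bijection. So if one side is $+\infty$ the other is as well, and we can phrase the argument through the general invariance of KL under measurable bijections, i.e.\ data processing applied to $T$ and to $T^{-1}$. This avoids needing integrability for the infinite case.

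Second, for the objective identity, I would expand the latent KL:
\[
\mathrm{KL}(\pi^u\|p_\alpha^u)=-\mathcal{H}(\pi^u)-\mathbb{E}_{\pi^u}\Bigl[\tfrac{1}{\alpha}Q_\phi(s,T(u))+\log|\det J_T(u)|\Bigr]+\log Z_\alpha.
\]
The normalizer here equals $Z_\alpha$ because, by change of variables, $\int e^{Q(T(u))/\alpha}|\det J_T(u)|\,du=\int e^{Q(a)/\alpha}\,da$. This equals $\log Z_\alpha-\tfrac1\alpha\mathcal{J}^u(\pi^u)$. The action-space identity from the proof of Theorem~\ref{thm:improve} gives $\mathrm{KL}(\pi^a\|p_\alpha)=\log Z_\alpha-\tfrac1\alpha\mathcal{J}(\pi^a)$. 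Equating the two and using $0<Z_\alpha<\infty$ yields $\mathcal{J}^u(\pi^u)=\mathcal{J}(\pi^a)$.

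Alternatively, the objective identity can be checked directly from the entropy change-of-variables formula
\[
\mathcal{H}(\pi^a)=\mathcal{H}(\pi^u)+\mathbb{E}_{\pi^u}\log|\det J_T(u)|
\]
together with $\mathbb{E}_{\pi^a}Q=\mathbb{E}_{\pi^u}Q\circ T$.

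The main obstacle is bookkeeping of integrability: splitting the KL into entropy, $Q$, and log-Jacobian terms is only legitimate when each term is finite. This is what the "stated integrability conditions" must supply. Concretely, I would assume $\mathbb{E}_{\pi^u}|Q(T(u))|<\infty$, $\mathbb{E}_{\pi^u}|\log|\det J_T(u)||<\infty$ (equivalently $\mathbb{E}\sum_j|\log(1-\tanh^2 u_j)|<\infty$, implied by a first moment since the log-Jacobian grows linearly), and a finite entropy. Under these assumptions every rearrangement above is valid, and $\mathcal{H}(\pi^a)$ is then finite by the entropy formula.
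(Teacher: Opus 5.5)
Your proposal is correct and follows essentially the paper's proof. The KL identity comes from cancellation of the Jacobian factors in the likelihood ratio under $a=T(u)$. Your alternative derivation of $\mathcal{J}^u(\pi^u\mid s)=\mathcal{J}(\pi^a\mid s)$, via $\mathcal{H}(\pi^a)=\mathcal{H}(\pi^u)+\mathbb{E}_{\pi^u}\log|\det J_T(u)|$ and $\mathbb{E}_{\pi^a}[Q_\phi(s,a)]=\mathbb{E}_{\pi^u}[Q_\phi(s,T(u))]$, is exactly the paper's argument.

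Your primary route, which deduces the objective identity from the KL identity after checking that the latent normalizer equals $Z_\alpha$, is also valid. Together with your data-processing treatment of the infinite-KL case, it makes explicit the integrability bookkeeping the paper leaves implicit.
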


\begin{proof}
The density of $\pi^a$ at $T(u)$ is
$\pi^u(u)/|\det J_T(u)|$, and the same change-of-variables relation
holds between $p_\alpha^u$ and $p_\alpha$. The Jacobian factors
therefore cancel in the likelihood ratio, yielding the KL identity.
Moreover,
\begin{equation}
    \mathcal{H}(\pi^a)
    =
    \mathcal{H}(\pi^u)
    +
    \mathbb{E}_{u\sim\pi^u}
    \left[
        \log\left|\det J_T(u)\right|
    \right].
\end{equation}
Substituting this identity into \eqref{eq:latent_policy_objective}
gives $\mathcal{J}^u(\pi^u\mid s)=\mathcal{J}(\pi^a\mid s)$.
\end{proof}

Accordingly, Theorem~\ref{thm:improve} applies to the idealized
latent-coordinate update and yields the same KL and
entropy-regularized-objective conclusions for the induced action
distribution $\pi^a$. The practical implementation recovers bounded
actions as $T(u)$.

\section{Baselines}
\label{app:baselines}
We reproduce the baselines by following the authors' public implementations, keeping their default hyperparameters for all experiments. For evaluation, we follow the action-generation procedure used in each baseline's public implementation.

\paragraph{SAC.}
Soft Actor--Critic (SAC) \citep{haarnoja2018soft} is an off-policy actor--critic algorithm based on maximum-entropy reinforcement learning. It uses a squashed Gaussian policy and optimizes expected Q-values together with policy entropy. The critic is trained using transitions sampled from a replay buffer. SAC serves as a representative Gaussian-policy baseline for evaluating the performance of RFPO.

\paragraph{DPMD.}
Diffusion Policy Mirror Descent (DPMD) \citep{ma2025efficient} trains a diffusion policy to approximate a policy mirror descent update. This update favors high-Q actions while penalizing KL divergence from the previous policy. Its target distribution is proportional to the previous policy density multiplied by an exponential Q-value weight. DPMD uses reweighted score matching to learn this distribution without requiring direct samples from it.

\paragraph{SDAC.}
Soft Diffusion Actor--Critic (SDAC) \citep{ma2025efficient} uses the same reweighted score-matching framework to train a diffusion policy for maximum-entropy policy optimization. Its policy target is a Boltzmann distribution induced by the Q-function. The resulting training objective avoids direct sampling from this target and backpropagation through the entire reverse diffusion process. SDAC provides a comparison with a diffusion-based method that also learns a Q-induced target distribution.

\paragraph{DIPO.}
Diffusion Policy Optimization (DIPO) \citep{yang2023policy} is an online RL algorithm that alternates between action improvement and diffusion policy learning. It refines replay-buffer actions through gradient ascent on the learned Q-function and trains a diffusion policy on the resulting actions using a denoising objective. DIPO therefore provides a comparison with an approach that explicitly constructs policy-training targets through action refinement. Whereas DIPO starts from replay-buffer actions and uses Q-gradient ascent, RFPO starts from current-policy actions and uses MALA before self-target FM.

\paragraph{RFM.}
Reverse Flow Matching (RFM) \citep{li2026reverse} trains a flow policy toward a Boltzmann distribution induced by the Q-function without requiring direct samples from that distribution. It formulates velocity-target construction as a posterior mean estimation problem conditioned on intermediate noisy samples. These targets are estimated using self-normalized importance sampling with control variates derived from Langevin Stein operators, combining Q-value and Q-gradient information. RFM provides a comparison with a Q-guided flow-policy method that directly estimates velocity targets, whereas RFPO constructs refined action targets through MALA and trains the policy through self-target FM.

% \clearpage
\section{Full Results on Multimodal Target Distributions}
\label{app:toy_full_results}
This section presents the full results of the experiments in Section~\ref{sec:multimodal_distributions}, showing generated samples at selected cycles and the evolution of component-region masses over 3,000 cycles. In the component-region-mass curves, dashed lines indicate ground-truth masses, and shaded bands indicate standard deviations across four random seeds.

\begin{figure}[H]
    \centering
    \includegraphics[width=\linewidth]{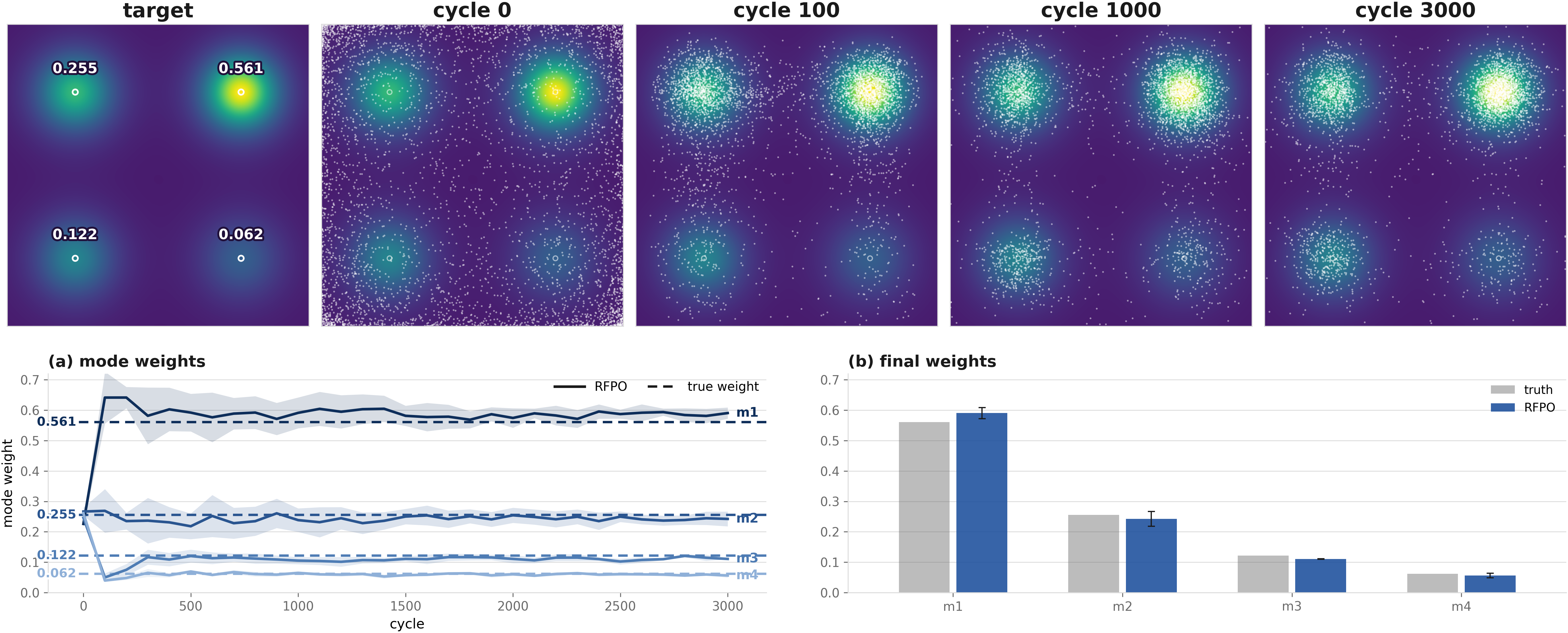}
    \caption{Evolution of the generated sample distribution and mode weights for the isotropic target.}
    \label{fig:toy_2d}
\end{figure}

\begin{figure}[H]
    \centering
    \includegraphics[width=\linewidth]{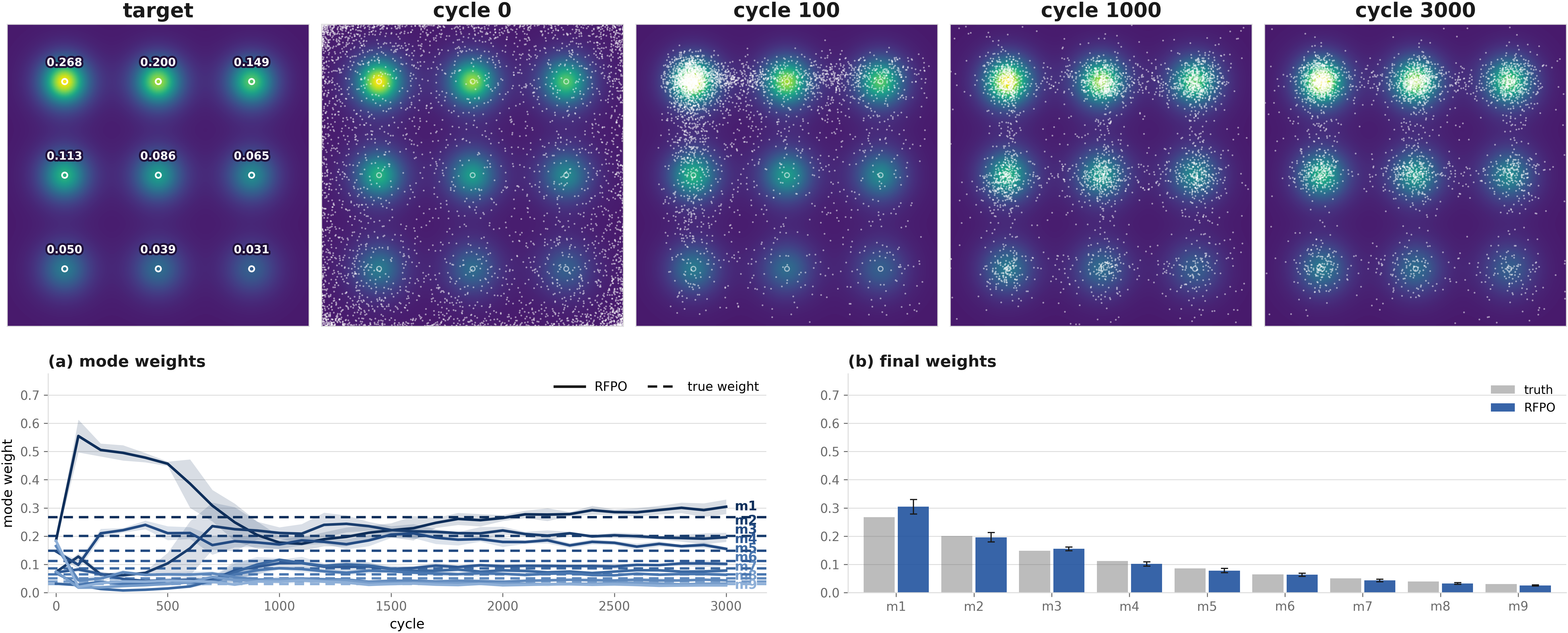}
    \caption{Evolution of the generated sample distribution and mode weights for the grid target.}
    \label{fig:toy_grid}
\end{figure}

\begin{figure}[H]
    \centering
    \includegraphics[width=\linewidth]{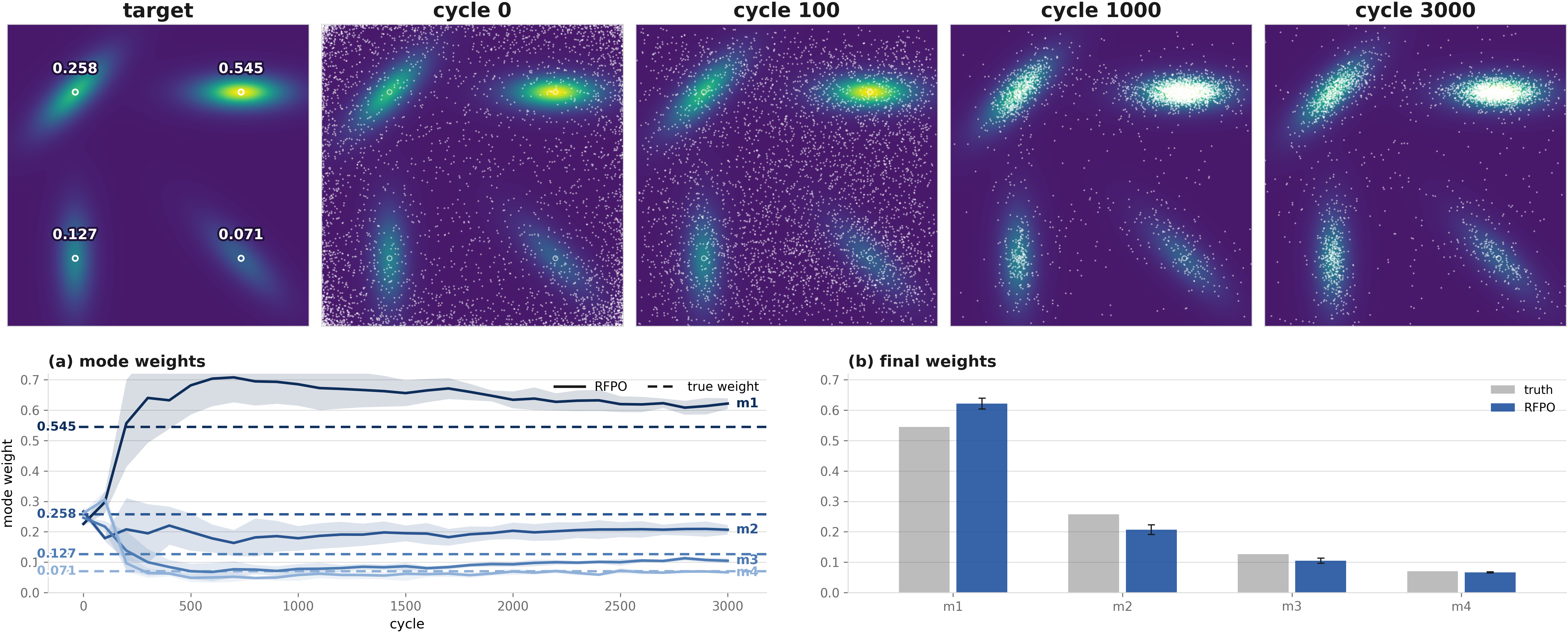}
    \caption{Evolution of the generated sample distribution and mode weights for the anisotropic target.}
    \label{fig:toy_aniso}
\end{figure}

\begin{figure}[H]
    \centering
    \includegraphics[width=\linewidth]{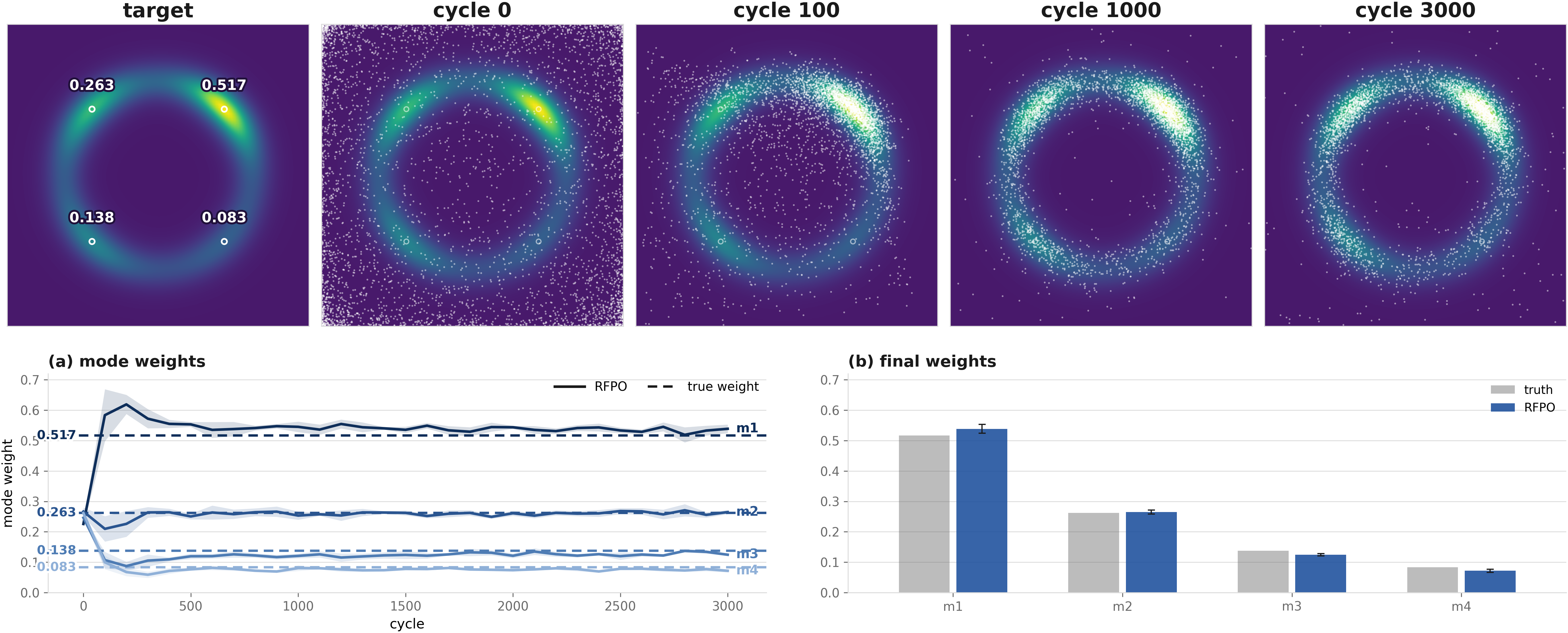}
    \caption{Evolution of the generated sample distribution and mode weights for the ring-shaped target.}
    \label{fig:toy_ring}
\end{figure}

\begin{figure}[H]
    \centering
    \includegraphics[width=\linewidth]{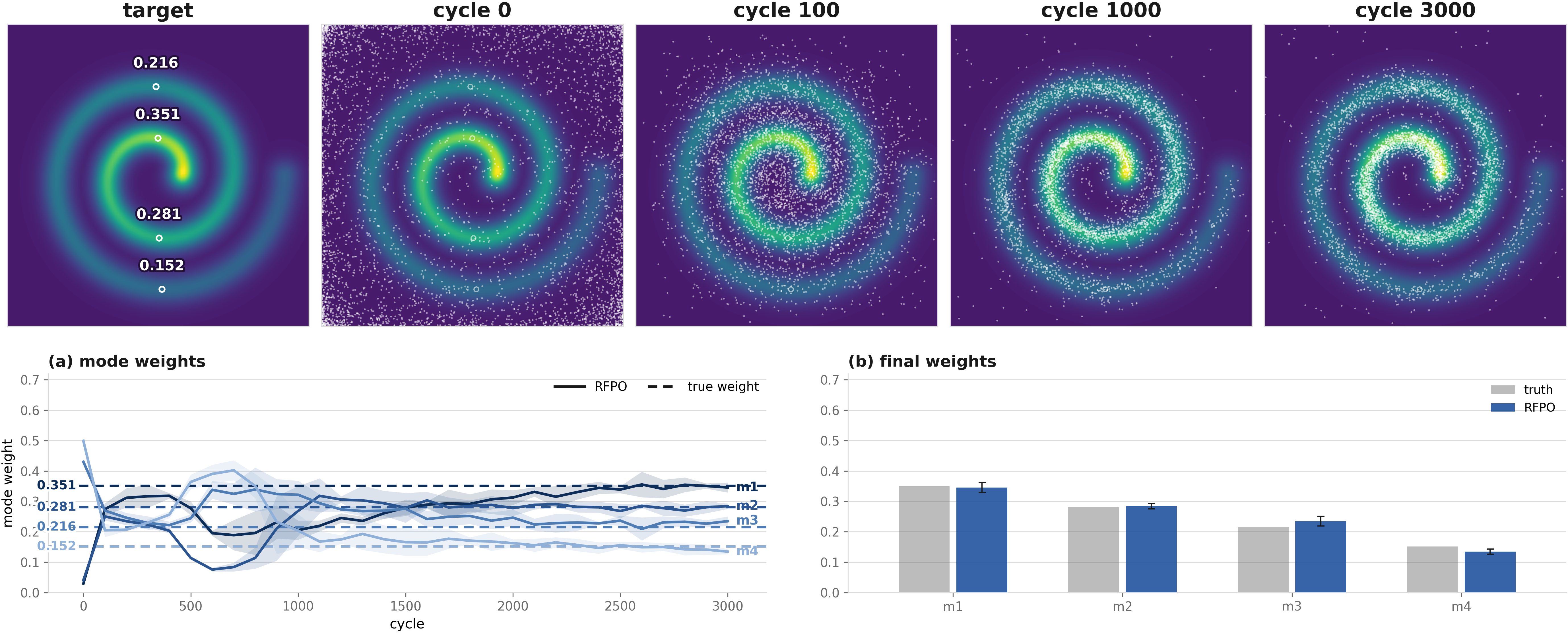}
    \caption{Evolution of the generated sample distribution and mode weights for the spiral-shaped target.}
    \label{fig:toy_spiral}
\end{figure}

\begin{figure}[H]
    \centering
    \includegraphics[width=\linewidth]{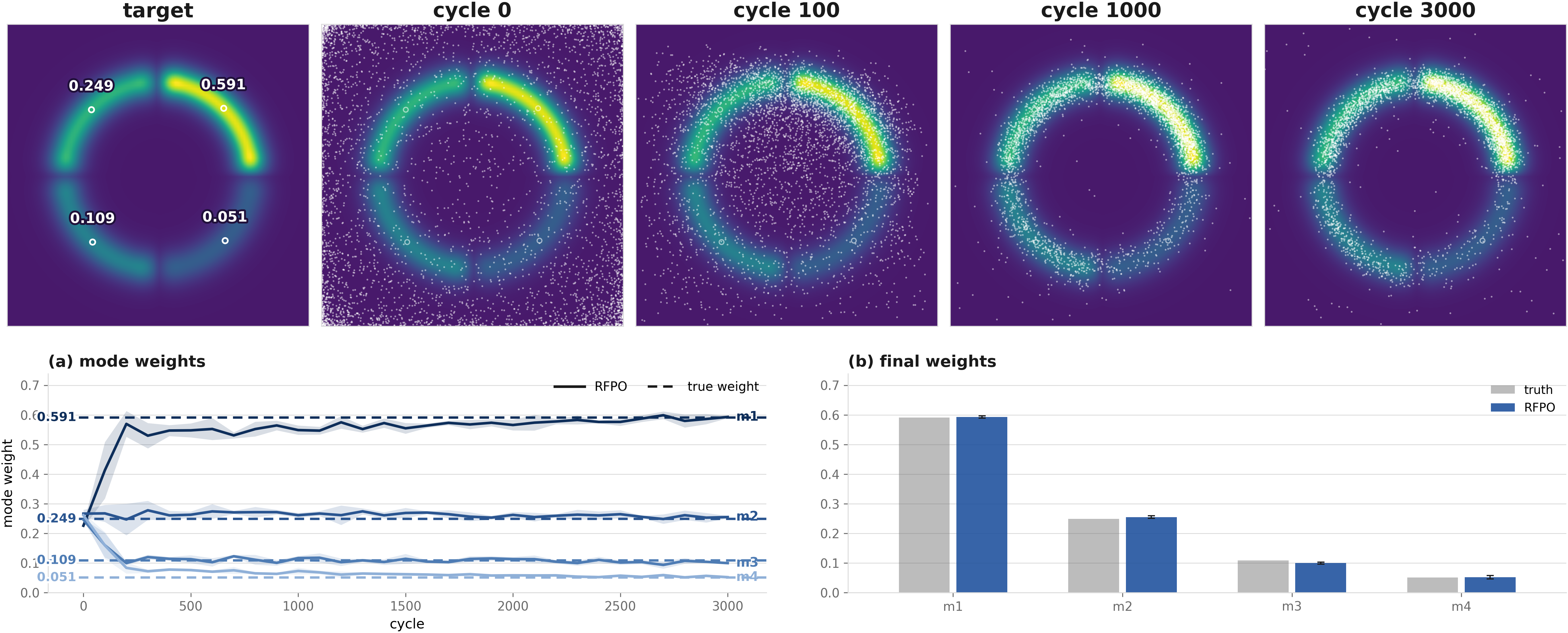}
    \caption{Evolution of the generated sample distribution and mode weights for the arc-shaped target.}
    \label{fig:toy_arc}
\end{figure}

\section{Computational Cost}
\label{app:compute}

Refinement and self-target flow matching introduce additional computation into each policy update. We therefore compare the implementation-level network cost of each method by counting network invocations and the number of input rows processed per environment step. Counting rows is important because a network invocation on $25{,}600$ rows is not comparable to one on $256$ rows, and the methods differ substantially in this respect.

Table~\ref{tab:compute} follows the protocol of Section~\ref{sec:continuous_control}: one training iteration per environment step, batch size $256$, twin critics counted as two networks, and one action sampled per environment step. The reported counts use Hopper-v4. Apart from DIPO's environment-specific action-ascent length, the call and row counts do not depend on the observation or action dimension, although the per-row computational cost does. The policy and critic columns report forward-pass calls and input rows. Total rows additionally includes backward passes, with each forward or backward pass contributing its batch size once, without weighting. The GFLOP estimates account for network architecture and approximate backward-pass costs: a backward pass that computes parameter gradients is counted as two forward-pass equivalents, whereas one that computes input gradients only is counted as one. Input-gradient-only backward passes include those used for the MALA score in RFPO, the action-ascent update in DIPO, and the posterior estimator in RFM. These are analytical estimates of network computation rather than wall-clock measurements.

\begin{table}[H]
    \centering
    \caption{Network cost per environment step. Within the policy and critic columns, “calls” counts forward network invocations and “rows” counts the corresponding input rows. “Total rows” includes both forward and backward passes without weighting, and $\times$SAC normalizes this total by SAC. The GFLOP estimates account for network architecture and the backward-pass weights described in the text. RFPO uses $3\times512$ trunks, whereas the baselines use hidden units of width $256$; consequently, the FLOP comparison differs from the row-count comparison.}
    \label{tab:compute}
    \small
    \begin{tabular}{@{}lrrrrrrr@{}}
        \toprule
        & \multicolumn{2}{c}{Policy network} & \multicolumn{2}{c}{Critic networks}
        & \multirow{2}{*}{\shortstack[r]{Total\\rows}} & \multirow{2}{*}{$\times$SAC}
        & \multirow{2}{*}{\shortstack[r]{GFLOP\\($\times$SAC)}} \\
        \cmidrule(lr){2-3} \cmidrule(lr){4-5}
        & calls & rows & calls & rows & & & \\
        \midrule
        SAC   &   3 &     513 &  6 &  1{,}536 &   3{,}329 &   1.0 & 0.6\ \ (1.0) \\
        RFPO  &  65 & 11{,}540 & 16 &  4{,}096 &  20{,}500 &   6.2 & 24.7\ (43) \\
        DIPO  & 201 & 25{,}956 & 44 & 11{,}264 &  48{,}228 &  14.5 & 14.6\ (26) \\
        DPMD  &  41 & 164{,}736 & 10 & 17{,}984 & 183{,}488 &  55.1 & 51.6\ (91) \\
        RFM   &  21 & 164{,}416 & 10 & 85{,}056 & 301{,}440 &  90.5 & 85.3\ (150) \\
        SDAC  &  61 & 344{,}704 & 12 & 66{,}624 & 428{,}224 & 128.6 & 124.4\ (218) \\
        \bottomrule
    \end{tabular}
\end{table}

\paragraph{Where the cost goes.}
RFPO's two $20$-step ODE integrations---one to generate the endpoint for refinement and one to generate the next action for the critic target---account for $10{,}240$ of its $20{,}500$ total rows. The five-step MALA refinement accounts for $6{,}144$ rows, or $30\%$ of the total. Each refinement uses six critic-ensemble evaluations and six input-gradient computations at batch size $256$.

The baselines allocate computation differently. DIPO uses a $100$-step reverse chain and a $20$-step action-ascent procedure. DPMD samples one best-of-$32$ chain at batch size $8{,}192$, accounting for $163{,}840$ rows, or $89\%$ of its total cost. SDAC draws this chain twice, accounting for most of its additional cost relative to DPMD. RFM integrates $10$ Euler steps on $16{,}384$ rows and then evaluates the twin critic, including input-gradient computations, on $25{,}600$ rows for posterior-mean estimation.

The widest tensor processed by an RFPO network has the training batch size of $256$. In contrast, the particle-based methods expand this dimension to $8{,}192$ for DPMD and SDAC, and to $16{,}384$ and then $25{,}600$ for RFM. RFPO instead uses six short critic input-gradient computations to refine each policy-generated action.

\end{document}